\newtheorem{Th}{Theorem}
\newtheorem{Lm}{Lemma}
\title{X-MEN: Guaranteed XOR-Maximum Entropy \\
Constrained Inverse Reinforcement Learning}
\author[1]{\href{mailto:<ding274@purdue.edu>?Subject=Your UAI 2022 paper}{Fan Ding}{}}
\author[1]{Yexiang Xue}
\affil[1]{%
    Computer Science Dept.\\
    Purdue University\\
    West Lafayette, Indiana, USA
}
\begin{document}
\maketitle

\begin{abstract}
Inverse Reinforcement Learning (IRL) is a powerful way of learning from demonstrations. 
In this paper, we address IRL problems with the availability of prior knowledge that optimal policies will never violate certain constraints. 
Conventional approaches ignoring these constraints need many demonstrations to converge. 
We propose XOR-Maximum Entropy Constrained Inverse Reinforcement Learning (X-MEN), which is guaranteed to converge to the optimal policy in linear rate w.r.t. the number of learning iterations. 
X-MEN embeds XOR-sampling -- a provable sampling approach which transforms the \#-P complete sampling problem into queries to NP oracles -- into the framework of maximum entropy IRL. 
%
%
X-MEN also guarantees the learned policy will never generate trajectories that violate constraints. 
Empirical results in navigation demonstrate that X-MEN converges faster to the optimal policies compared to baseline approaches and always generates trajectories that satisfy multi-state combinatorial  constraints. 
%
%
%
%
\end{abstract}

\section{INTRODUCTION}
\begin{figure*}[t]
\subfigure[Add no constraint]{\label{fig:intuition1}
\includegraphics[width=0.32\linewidth]{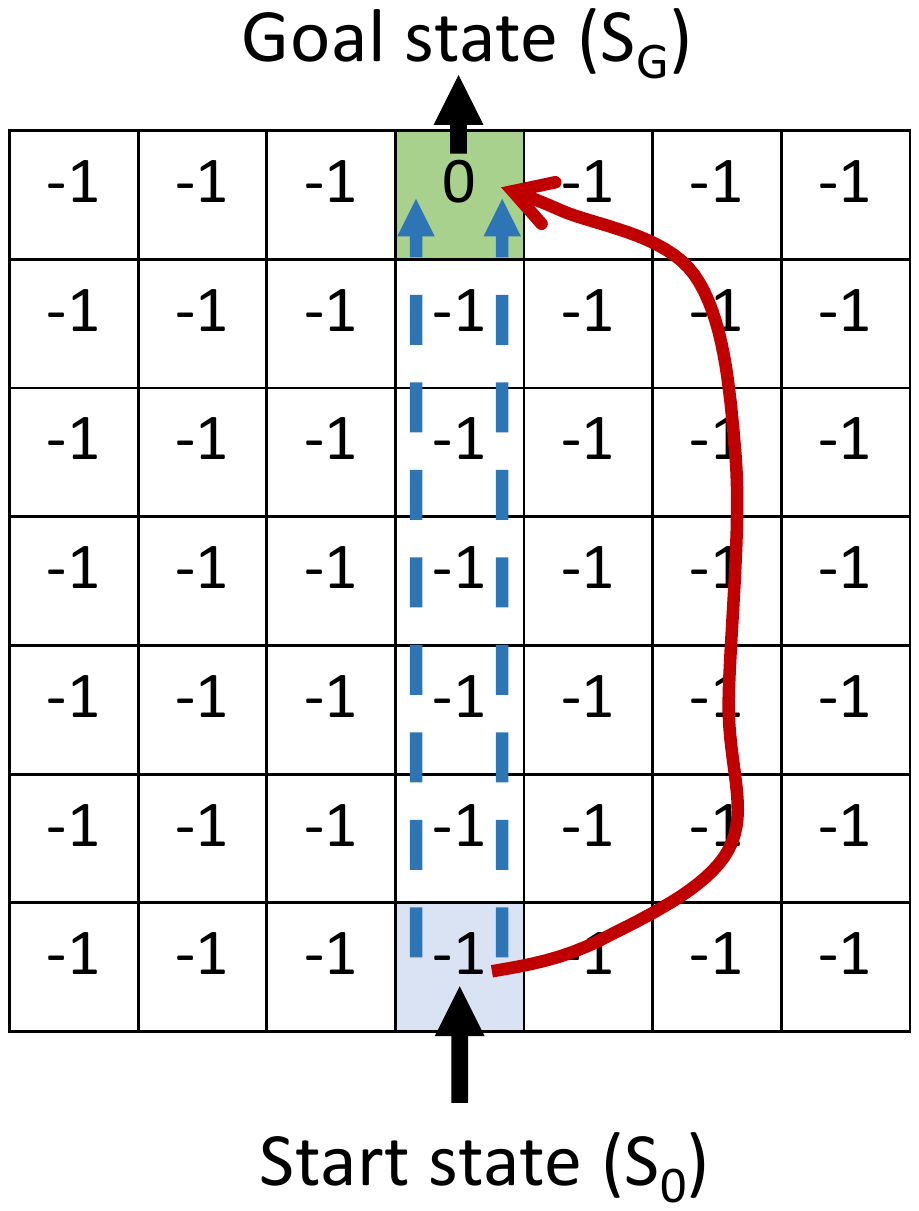}}
\subfigure[Add single-state constraints $\mathcal{C}_1$]{\label{fig:intuition2}
\includegraphics[width=0.32\linewidth]{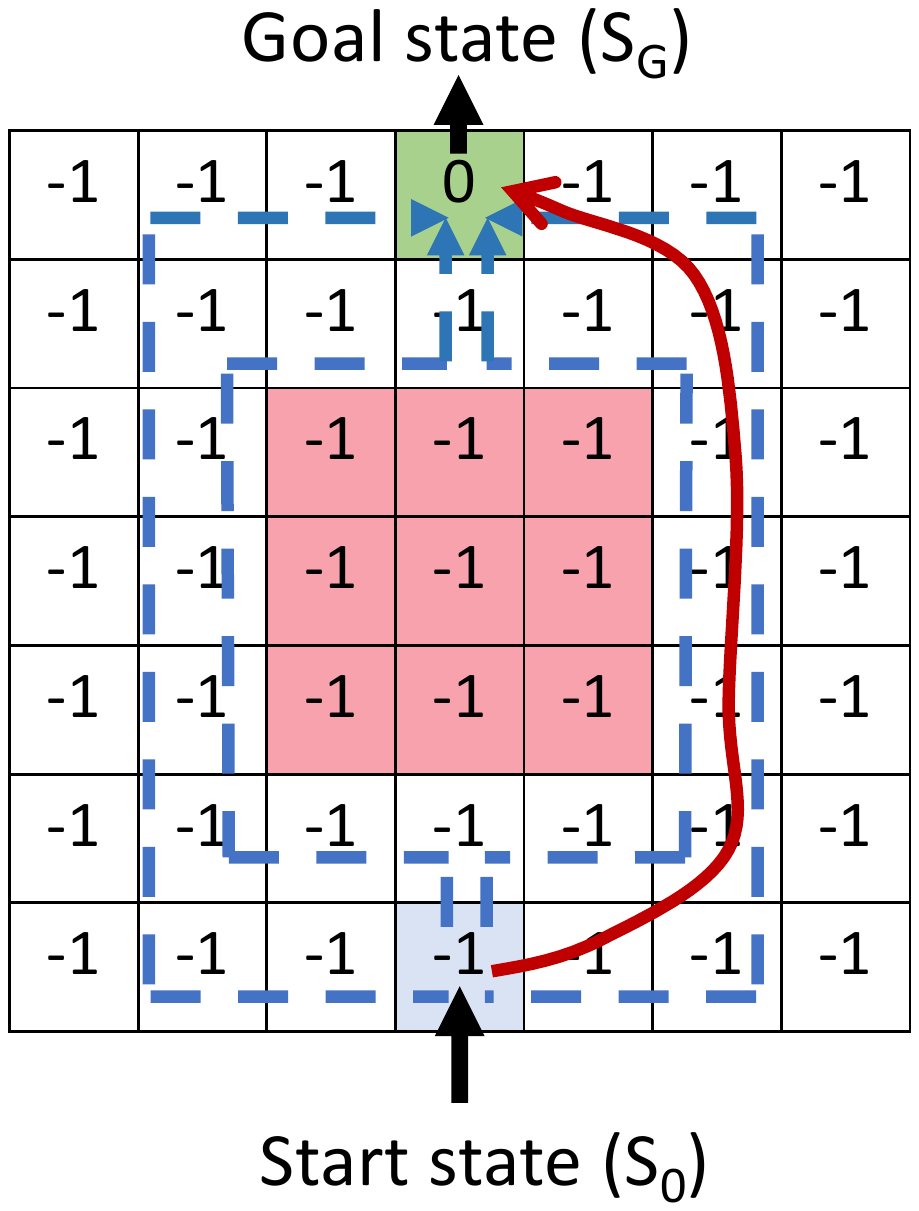}}
\subfigure[Add multi-state  constraint $\mathcal{C}_2$]{\label{fig:intuition3}
\includegraphics[width=0.32\linewidth]{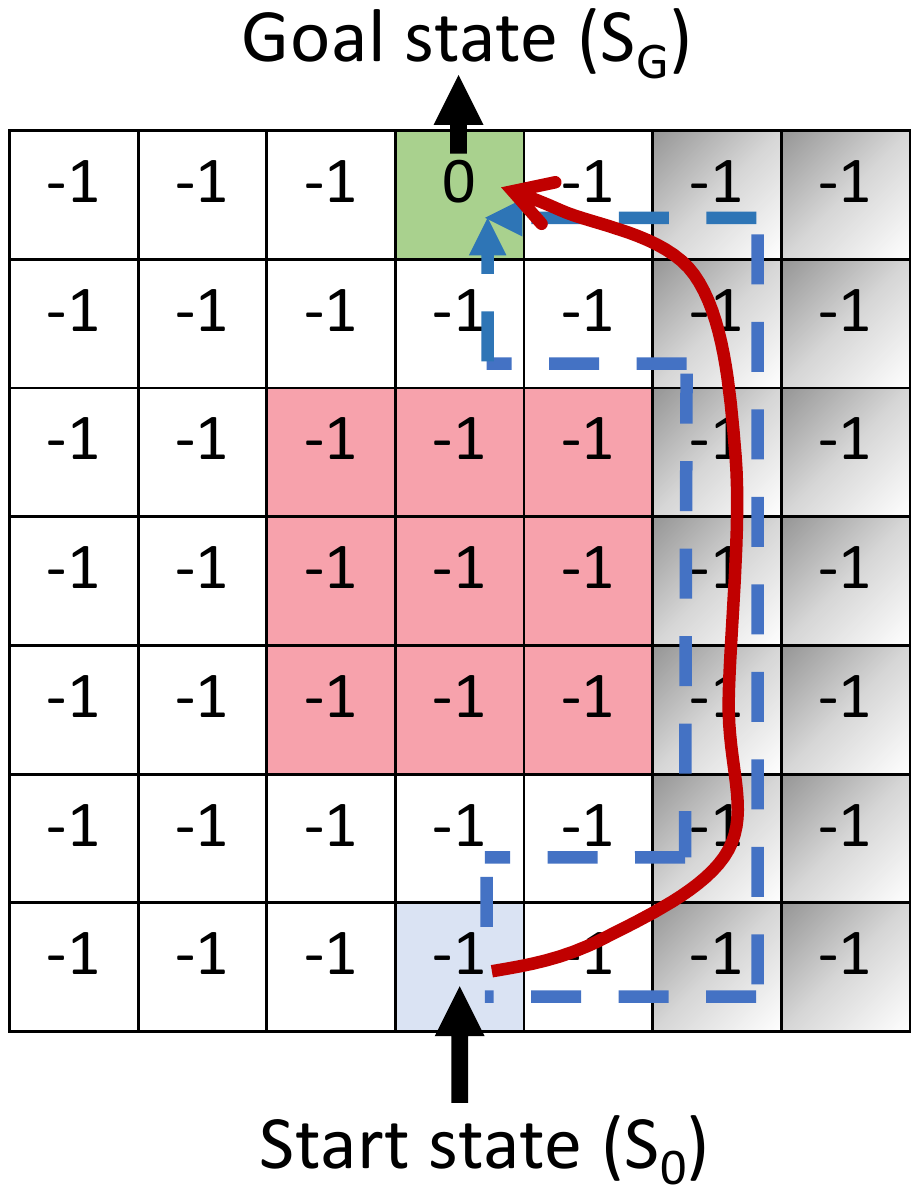}}
\caption{Examples of constrained IRL problems. The agent wants to move from the start state $S_0$ (blue grid) to the goal state $S_G$ (green grid). Ground truth demonstration is shown in the red line. The same initial reward function before learning is used for all 3 situations, with one-step reward listed in each grid. Most likely trajectories under the initial reward function (e.g.,those maximizing rewards and subject to constraints) are shown using blue dashed arrows. (\textbf{a}) When no constraint is added to the MDP, the agent finds the shortest path directly upward from $S_0$ to $S_G$. (\textbf{b}) When single-state constraint $\mathcal{C}_1$, which forbids the agent to  to pass through the red grids, is imposed, the agent can detour from either the  left or the right side. (\textbf{c}) When there are an additional multi-state constraint $\mathcal{C}_2$ imposed, which constrains at least half of all  passing states in the shaded area,   the optimal trajectory is to detour from the right side. Notice that this behavior aligns with the demonstration.} 
\label{fig:intuition}
\end{figure*}

Inverse Reinforcement Learning (IRL) \citep{ng2000algorithms,abbeel2004apprenticeship,ziebart2008maximum,arora2021survey,li2017deep}
provides an important way to learn from demonstrations. 
IRL assumes that the demonstrator implicitly maximizes the cumulative reward of a Markov Decision Process (MDP). 
The goal of IRL is to recover the unknown reward function from the observed demonstrations. 
%
Various IRL algorithms have been proposed, including Linear IRL \citep{ng2000algorithms,abbeel2004apprenticeship} and Large-Margin Q-Learning \citep{ratliff2006maximum}. 
To differentiate among multiple reward functions which lead to similar behaviors, Maximum Entropy Inverse Reinforcement Learning (MaxEnt IRL) \citep{ziebart2008maximum,wulfmeier2015maximum,finn2016guided,ho2016generative} assumes that the demonstrator samples trajectories from a maximum entropy distribution parameterized by the cumulative reward. 

In this paper, we focus on IRL problems where certain constraints are known beforehand and hence do not need to be rediscovered by the learning algorithm. 
The trajectories from the demonstrator are known to satisfy these constraints and we require the IRL agent to follow trajectories satisfying these constraints as well. 
Indeed, standard IRL algorithms \citep{abbeel2007application,vasquez2014inverse,scobee2018haptic} can be applied to this scenario without modifications and they eventually discover the optimal policy. 
Nevertheless, it may require a large amount of demonstrations to learn these constraints. 
Worse still, it is still possible for the IRL agent to produce trajectories which occasionally violate constraints even after many training epochs.
This is especially problematic in safety critical domains, such as autonomous driving, robotic surgery, etc. 


Recent work has attempted to embed constraints into IRL. For example, the work of \citep{vazquez2017learning,kalweit2020deep}  uses demonstrations to learn a rich class of possible specifications that can represent a task. Others have focused specifically on learning constraints, that is, behaviors that are expressly forbidden or infeasible \citep{chou2018learning,subramani2018inferring,mcpherson2018modeling,scobee2019maximum,anwar2020inverse,mcpherson2021maximum}. 
Nevertheless, so far the attempts have been focused on \textit{single-state} constraints, where a handful of actions are forbidden in certain states and these forbidden actions have little impact for future state-action transitions. Their approaches  cannot address \textit{multi-state combinatorial} constraints, which limits a chain of actions spanning multiple time stamps. 
For example, Figure~\ref{fig:intuition} (c) demonstrates a navigation task where constraints require at least half of the states in each trajectory is located in the shaded area. 
With this constraint imposed, only trajectories passing the right-hand side are possible. 
Such constraints cannot be addressed with previous approaches which mask out actions from certain states. 
 

In this work, we propose \textbf{X}OR-\textbf{M}aximum \textbf{EN}tropy (X-MEN) Constrained Inverse Reinforcement Learning, which \textbf{\textit{provably converges to the optimal reward function for MaxEnt IRL in linear number of training steps}}, even in the presence of hard combinatorial constraints. 
X-MEN also guarantees to produce trajectories which satisfy multi-state combinatorial constraints. 
X-MEN is based on the Maximum Entropy IRL learning \citep{ziebart2008maximum,boularias2011relative}. 
Distinctively, X-MEN harnesses XOR-sampling to estimate the  gradient of the expected reward from the current model distribution. 
The recently proposed XOR-Sampling \citep{Gomes2006NearUniformSampling,Ermon13Wish,ermon2013embed} reduces the sampling problem into queries of NP oracles via hashing and projection, and guarantees a constant factor approximation for the expectation estimation. 
After obtaining samples, X-MEN uses Stochastic Gradient Descent (SGD) to maximize the difference between expected reward from the demonstration and from the trajectories sampled from the current model distribution, a procedure closely resembling contrastive divergence learning, to maximize the likelihood of the demonstrated behavior. 
Theoretic analysis reveals that X-MEN provably converges to the \textit{global optimum} of the likelihood function in linear number of SGD iterations. 
%
%
In addition, X-MEN can handle rewards parameterized either in a linear form or in the representation of a neural network.
During testing, the policy learned by X-MEN can also be adapted to satisfying additional constraints without retraining. 


In experiments, we compare the performance of X-MEN against MaxEnt IRL \citep{ziebart2008maximum} and additional baselines such as Reletive Entropy IRL (RE-IRL) \citep{boularias2011relative} and recently proposed maximum likelihood constraint inference (MLCI) \citep{scobee2019maximum} on several grid world environments and in an imitation learning environment with human data of navigating around obstacles. All these environments require the agent to follow constraints. 
Our experiment shows the learned trajectories of X-MEN 100\% satisfy constraints, while a majority of trajectories produced by competing approaches do not ($\geq 60\%$ violate constraints). 
Also X-MEN produces trajectories that closely imitate those of the demonstrations. 
In summary, our contributions are as follows:
\begin{itemize}
    \item We propose X-MEN, an algorithm that provably converges to the optimal reward function for MaxEnt IRL in linear number of training steps, even in the presence of multi-state combinatorial constraints.
    \item X-MEN is guaranteed to produce trajectories which satisfy combinatorial constraints, beyond the capability of previous approaches. 
    \item Experimental results reveal that X-MEN produces trajectories that closely resemble demonstration while satisfying constraints, outperforming a series of constrained IRL baselines. 
\end{itemize}


\section{INVERSE REINFORCEMENT LEARNING}
Here we present a brief overview of IRL.   $\mathcal{M}=\{\mathcal{S}, \mathcal{A}, T, R, \gamma\}$ is a Markov Decision Process (MDP), where $\mathcal{S}$ denotes the state space of all states $s$, $\mathcal{A}$ denotes the set of possible actions $a$, $T$ denotes the transition probability function, $R$ denotes the reward function, and $\gamma \in [0, 1]$ is the  discount factor. Given an MDP, an optimal policy $\pi^*$ is the one to maximize the expected cumulative reward. 
IRL considers the case where  the reward function is unknown. Instead, a set of expert demonstrations $\mathcal{D}=\{\tau_1,\ldots, \tau_N\}$ is provided which are sampled from a user policy $\pi$, i.e. provided by a demonstrator. Each demonstration
consists of a series of state-action pairs $\tau_i=\{(s_{i0},a_{i0}), (s_{i1},a_{i1}),\ldots,(s_{iL},a_{iL_{i}})\}$, where $L_i$ denotes the length of the trajectory. The goal of IRL is to uncover the hidden reward $r$ from the demonstrations.

\subsection{Maximum Entropy IRL}
A number of approaches have been proposed to tackle the IRL problem \citep{ng2000algorithms,abbeel2004apprenticeship,ratliff2006maximum}. One crucial problem to address for IRL is to differentiate among multiple reward functions that lead to the same demonstrations. 
An influential formulation is Maximum Entropy IRL \citep{ziebart2008maximum}, which can also be viewed as a special case of Relative Entropy IRL (RE-IRL) \citep{boularias2011relative,snoswell2020revisiting}. 
In this formulation, the probability that the demonstrator chooses a given trajectory is proportional to the exponent of the reward along the path. Denote  $R_{\theta}(\tau)=\sum_{t=1}^{L}\gamma^t R_{\theta}(s_t,a_t)$ as the discounted cumulative reward  parameterized by $\theta$. The probability of choosing trajectory $\tau$ is:
\begin{align}
    P_{choice}(\tau|\theta,T) = \frac{1}{Z_{\theta}}e^{R_{\theta}(\tau)}.\label{eq:zchoice}
\end{align}
Here $Z_\theta$ is a normalization constant to ensure $P_{choice}(\tau|\theta, T)$ is a probability distribution. 
Let $d_{0}$ as the probability distribution of the initial state. %
$D(\tau)=d_0(s_1)\prod_{t=1}^{L}T(s_{t+1}|s_t,a_t)$ is the probability of state transitions which leads to the trajectory $\tau$. 
The overall probability of observing trajectory $\tau$ from demonstrations hence is the product of the choice probability times the state transition probability:
\begin{align}
    P(\tau|\theta,T) = \frac{1}{Z_{\theta}}e^{R_{\theta}(\tau)}D(\tau).
\end{align}

\subsection{IRL with Multi-state Combinatorial Constraints}
Despite the success of many IRL models, many real world tasks require additional constraints to be satisfied when learning from demonstrations. 
%
In this work, we restrict ourselves to dealing with hard combinatorial constraints, as shown in Figure \ref{fig:intuition}. Note that this is not particularly restrictive since, for example, safety constraints are often hard constraints as well are constraints imposed by physical laws. Different from previous work that only defines constraints as a set of forbidden state-action pairs, which we call single-state constraints, here we consider more general  cases of multi-state combinatorial constraints. Denote $C(\tau)=\{c_i(\tau)\}$ as the set of constraints that each trajectory must satisfy, and $I_C(\tau)$ the indicator function of whether constraints $C(\tau)$ are satisfied. Formally,
\begin{align*}
    I_C(\tau)=\begin{cases}
    1, ~~~~ \text{if}~ \tau~ \text{satisfies the constraints set }~ C(\tau)\\
    0, ~~~~ \text{otherwise}
    \end{cases}
\end{align*}
We augment the MDP into the constrained MDP: $\mathcal{M}^C=\{\mathcal{S}, \mathcal{A}, T, R, C\}$. The probability of observing a trajectory $\tau$ now becomes\footnote{Notice $Z_\theta$ in Equation \ref{eq:constarined_p}  is different from $Z_\theta$ in Equation \ref{eq:zchoice} because of the introduction of $I_C$. $Z_\theta$ still normalizes the probability in Equation \ref{eq:constarined_p}. Without too much cluttering, we use the same symbol $Z_\theta$ in both equations.}
\begin{align}\label{eq:constarined_p}
    P(\tau|\theta,T)=\frac{1}{Z_{\theta}}e^{R_{\theta}(\tau)}D(\tau)I_C(\tau),
\end{align}
Given the set of expert demonstrations $\mathcal{D}$, we want to find the best reward function by minimizing the negative log likelihood function $L(\theta)$.
\begin{align*}
    \text{argmin}_{\theta}L(\theta)= \text{argmin}_{\theta}\frac{1}{|\mathcal{D}|}\sum_{\tau\in \mathcal{D}}-R_{\theta}(\tau) + \log Z_{\theta}.
\end{align*}
Notice only the terms related to the optimization variable $\theta$ are included in the rightmost equation.

\section{XOR Maximum Entropy IRL}

In this section we propose \textbf{X}OR-\textbf{M}aximum \textbf{EN}tropy Constrained Inverse Reinforcement Learning (X-MEN), to solve the inverse reinforcement learning problem with multi-state combinatorial constraints.
We develop X-MEN based on maximum entropy inverse reinforcement learning \citep{ziebart2008maximum,boularias2011relative,finn2016guided}.  Specifically, the model assumes that the expert samples the demonstrated trajectories $\{\tau_i\}$ from the distribution $P(\tau|\theta,T)$ in Equation \ref{eq:constarined_p},
where $R_{\theta}(s_t, a_t)=\theta^Tf(s_t,a_t)$ is represented by a linear combination of feature vector $f(s_t,a_t)$. $f(s_t,a_t)$ can be hand-crafted or generated by a deep neural network. 
Forward-backward dynamic programming can hardly solve this problem even if the dynamics is given due to the presence of the hard combinatorial constraints $I_C(\tau)$.
Our X-MEN has the ability to solve this problem by leveraging XOR and importance sampling to estimate  $P(\tau|\theta,T)$.
After learning with X-MEN, we can always generate valid trajectories. We use Stochastic Gradient Descent (SGD) to optimize the objective, where in each iteration we compute the gradient of the negative log likelihood:
\begin{align}\label{eq:grad_ll}
    &\nabla_{\theta}L(\theta)=\frac{1}{|\mathcal{D}|}\sum_{\tau\in \mathcal{D}}\nabla_{\theta} R_{\theta}(\tau) + \nabla_{\theta}\log Z_{\theta}\notag\\ 
    &=\frac{1}{|\mathcal{D}|}\sum_{\tau\in \mathcal{D}}\nabla_{\theta} R_{\theta}(\tau) -\sum_{\tau}P(\tau|\theta,T) \nabla_{\theta}R_{\theta}(\tau). 
\end{align}
The first term in Equation~\ref{eq:grad_ll} is $\mathbb{E}_D[\nabla_{\theta} R_{\theta}(\tau)]$ and the second term is $\mathbb{E}_P[\nabla_{\theta} R_{\theta}(\tau)]$. To compute the gradient, we estimate the second  term $\mathbb{E}_P[\nabla_{\theta} R_{\theta}(\tau)]$ using importance sampling, where the following proposal distribution  $Q(\tau|\hat{\theta})$ is used: 
\begin{align}\label{eq:Q}
Q(\tau|\hat{\theta})=\frac{1}{Z_{\hat{\theta}}}I_C(\tau)e^{-\hat{\theta}^Tf(\tau)},
\end{align}
where $f(\tau)=\sum_{t=1}^L\gamma^t f(s_t,a_t)$ is the combined feature vector of the whole trajectory.
$\hat{\theta}$ are the parameters to adjust for the proposal $Q$. While we can set other parameters, in this paper we let $\hat{\theta}=\theta$. In other words, we use the current learned parameters $\theta$ to replace $\hat{\theta}$. 
Importance sampling is needed because the probabilities of state transitions $D(\tau)$ may not have an exponential family form. Currently, the implementation of XOR-sampling samples from exponential family distributions. Nevertheless, we notice this importance sampling step can be avoided if $D(\tau)$ has an exponential family distribution. 
To approximate $\nabla_{\theta}L(\theta)$ in Equation~\ref{eq:grad_ll}, we sample $M_1$ trajectories from the dataset of demonstrations to form the set $\mathcal{D}_{M_1}$. 
Then we sample $2M_2$ trajectories from the proposal distribution $Q(\tau | \theta)$ to form two sets $\mathcal{T}_{M_2,1}^Q$, $\mathcal{T}_{M_2,2}^Q$ (each set contains $M_2$ trajectories; details on how to obtain samples from $Q$ are discussed later). 
Then, we can use $g_{\theta}$ in the following Theorem \ref{Th:compute_g} to approximate  $\nabla_{\theta}L(\theta)$:
\begin{Th}\label{Th:compute_g}
Let the model distribution $P(\tau|\theta,T)$ defined in Equation \ref{eq:constarined_p} and the gradient of the likelihood function defined in Equation \ref{eq:grad_ll}. Let $g_{\theta}$ be  
\begin{align}\label{eq:g_theta}
    g_{\theta}=\frac{1}{M_1}\sum_{\tau\in \mathcal{D}_{M_1}} f(\tau)-\frac{\sum_{\tau\in\mathcal{T}^{Q}_{M_2,1}}D(\tau)f(\tau)}{\sum_{\tau\in\mathcal{T}^Q_{M_2,2}}D(\tau)},
\end{align}
where $\mathcal{D}_{M_1}$, $\mathcal{T}_{M_2,1}^Q$, $\mathcal{T}_{M_2,2}^Q$ are defined above. We must have $g_{\theta}$ is an unbiased estimation of $\nabla_{\theta}L(\theta)$, ie., $\mathbb{E}[g_{\theta}] = \nabla_{\theta}L(\theta)$. 
\end{Th}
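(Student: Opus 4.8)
\noindent\emph{Proof plan.}\quad The plan is to split the claim into two independent unbiasedness statements, one per term of $g_\theta$, after first exploiting the linear form of the reward. Since $R_\theta(s_t,a_t)=\theta^T f(s_t,a_t)$ we have $R_\theta(\tau)=\theta^T f(\tau)$ with $f(\tau)=\sum_{t}\gamma^t f(s_t,a_t)$, and hence $\nabla_\theta R_\theta(\tau)=f(\tau)$. Substituting this into Equation~\ref{eq:grad_ll} gives $\nabla_\theta L(\theta)=\mathbb{E}_D[f(\tau)]-\mathbb{E}_P[f(\tau)]$, where the first expectation is the empirical average over $\mathcal{D}$ and the second is with respect to the model distribution $P(\tau|\theta,T)$ of Equation~\ref{eq:constarined_p}. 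It therefore suffices to prove that (i) $\frac{1}{M_1}\sum_{\tau\in\mathcal{D}_{M_1}}f(\tau)$ has expectation $\mathbb{E}_D[f(\tau)]$, and (ii) the ratio term of Equation~\ref{eq:g_theta} has expectation $\mathbb{E}_P[f(\tau)]$. Statement (i) is immediate: since $\mathcal{D}_{M_1}$ is a uniform random subsample of $\mathcal{D}$, each summand has mean $\frac{1}{|\mathcal{D}|}\sum_{\tau\in\mathcal{D}}f(\tau)=\mathbb{E}_D[f(\tau)]$, and linearity of expectation closes it.

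The core of the argument is (ii), which I would establish through the importance-sampling identity underlying $Q$. First I would compute the importance weight: substituting the definitions of $P$ (Equation~\ref{eq:constarined_p}) and $Q$ (Equation~\ref{eq:Q}) with $\hat\theta=\theta$, the indicator $I_C(\tau)$ and the reward-dependent exponentials cancel between numerator and denominator, leaving $P(\tau|\theta,T)/Q(\tau|\hat\theta)=(Z_{\hat\theta}/Z_\theta)\,D(\tau)$ on the common support (the trajectories with $I_C(\tau)=1$, which also contains the support of $P$). Summing the resulting identity $Q(\tau|\hat\theta)(Z_{\hat\theta}/Z_\theta)D(\tau)=P(\tau|\theta,T)$ over all $\tau$ and using $\sum_\tau P(\tau|\theta,T)=1$ then pins down the ratio of normalizers: $Z_\theta/Z_{\hat\theta}=\sum_\tau Q(\tau|\hat\theta)D(\tau)=\mathbb{E}_Q[D(\tau)]$. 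Combining the two facts, $\mathbb{E}_P[f(\tau)]=\mathbb{E}_Q\!\big[(P/Q)f(\tau)\big]=(Z_{\hat\theta}/Z_\theta)\,\mathbb{E}_Q[D(\tau)f(\tau)]=\mathbb{E}_Q[D(\tau)f(\tau)]/\mathbb{E}_Q[D(\tau)]$. Finally I would use that $\mathcal{T}^Q_{M_2,1}$ and $\mathcal{T}^Q_{M_2,2}$ are two \emph{independent} i.i.d.\ samples from $Q$: $\frac{1}{M_2}\sum_{\tau\in\mathcal{T}^Q_{M_2,1}}D(\tau)f(\tau)$ is unbiased for $\mathbb{E}_Q[D(\tau)f(\tau)]$, $\frac{1}{M_2}\sum_{\tau\in\mathcal{T}^Q_{M_2,2}}D(\tau)$ is unbiased for $\mathbb{E}_Q[D(\tau)]$, and these two estimates are independent, so their quotient — which is exactly the ratio term of $g_\theta$, the $1/M_2$ factors cancelling — estimates $\mathbb{E}_Q[D(\tau)f(\tau)]/\mathbb{E}_Q[D(\tau)]=\mathbb{E}_P[f(\tau)]$.

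The step I expect to be the main obstacle is precisely this last one: passing from a quotient of two expectations to a quotient of two independent Monte Carlo sums. In general $\mathbb{E}[A/B]\neq\mathbb{E}[A]/\mathbb{E}[B]$ even for independent $A,B$, since $\mathbb{E}[1/B]\neq 1/\mathbb{E}[B]$; so the proof must rely on a feature specific to this construction — for instance exactness of the XOR samples drawn from $Q$, or the precise way the denominator sum $\sum_{\tau\in\mathcal{T}^Q_{M_2,2}}D(\tau)$ acts as a normalizer — and I would isolate that hypothesis, state it explicitly, and verify that under it the residual bias vanishes. Everything preceding this step (the reward linearization, the demonstration term, and the importance-weight / partition-function identities) is routine bookkeeping.
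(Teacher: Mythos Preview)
Your plan coincides with the paper's own argument: derive the importance weight $P/Q=(Z_{\hat\theta}/Z_\theta)D(\tau)$ on the constrained support, replace $Z_\theta$ by its own importance-sampling estimate from an independent batch $\mathcal{T}^Q_{M_2,2}$, cancel the $Z_{\hat\theta}/M_2$ factors, and appeal to $\mathbb{E}_Q[(P/Q)f(\tau)]=\mathbb{E}_P[f(\tau)]$ for the expectation. Your treatment of the demonstration term and of the weight simplification is the same as the paper's, only spelled out more carefully.

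The obstacle you isolate in your last paragraph is genuine, and you should know that the paper does \emph{not} overcome it. The paper's proof begins from the unnormalised estimator $g_Z=\frac{1}{M_2}\sum_{\tau\in\mathcal{T}^Q_{M_2,1}}\frac{P(\tau)}{Q(\tau)}f(\tau)$, which is honestly unbiased, then ``writes $Z_\theta$ out'' and silently replaces this exact normaliser by a second Monte Carlo sum over $\mathcal{T}^Q_{M_2,2}$. After that substitution the quantity being analysed has changed, yet the paper still justifies $\mathbb{E}[g_Z]=\mathbb{E}_P[f(\tau)]$ by pointing back to the pre-substitution form. In other words, the step $\mathbb{E}[A/B]=\mathbb{E}[A]/\mathbb{E}[B]$ is used implicitly and never defended; no special property of XOR-sampling or of $D(\tau)$ is invoked to make it exact. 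Your proposal therefore already matches the paper's level of rigour and is more candid about where the argument is only heuristic: the ratio estimator is consistent (and the bias is $O(1/M_2)$ under mild moment conditions), but strict unbiasedness as claimed in the theorem does not follow from anything in either your outline or the paper's proof.
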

We leave the proof of Theorem \ref{Th:compute_g} to the  supplementary materials. 
However, in practice sampling from distribution $Q$ is intractable due to the existence of the partition function $Z_{\theta}$ and the unbiased estimation is hard to obtain. In this paper we incorporate the recently proposed XOR-Sampling to get a constant approximation of the true gradient $\nabla_{\theta}L(\theta)$.

XOR-Sampling is used to obtain samples from the proposal distribution $Q$ such that the probability of drawing a sample is sandwiched between a constant multiplicative bound of the true probability. 
XOR-Sampling is the result of a rich line of research \citep{ermon2013embed,Gomes06XORCounting,Gomes2007XORCounting}, which translates the \#-P complete sampling problem into queries to NP oracles with provable guarantees. 
The high level idea of XOR sampling is as follows. 
Suppose one would like to draw one ball uniformly at random from an urn, with access to an oracle that returns one ball from the urn once queried (implemented as an NP-oracle when sampling in a combinatorial space). Notice that the oracle will not return the balls uniformly at random; i.e., it may return the same ball every time. 
XOR-sampling removes the balls from the urn by introducing additional XOR constraints. One can prove that half of the balls are removed at random, each time when one XOR constraint is introduced. 
Hence, one keeps adding XOR constraints until there are only one ball remaining. Then the last ball is returned. 
Since the balls are removed at random, the last left  must be a random one drawn from the original set of balls.
In practice, XOR-sampling also works with weighted probability distributions. 
Our paper uses the probabilistic bound of XOR-sampling via Theorem~\ref{Th:bound2}. 
We refer the readers to \cite{ermon2013embed,fan2021xorcd,fan2021xorsgd} for the details on the discretization scheme and the choice of the parameters of XOR-sampling to obtain the bound in Theorem \ref{Th:bound2}.

\begin{Th}\label{Th:bound2}\citep{ermon2013embed}~
Let $\delta>1$, $0 < \gamma < 1$, $w: \{0,1\}^n \rightarrow \mathbb{R}^+$ be an unnormalized 
probability density function where $n=|\mathcal{S}||\mathcal{A}|$. $Q(\tau|{\theta}) \propto w(\tau)$ is the normalized distribution and $C(\tau)$ is the set of hard combinatorial constraints. Then, with probability at least $1-\gamma$, XOR-Sampling$(w, C(\tau), \delta, \gamma)$ succeeds and outputs a sample $\tau_0$ by querying $O(n\ln(\frac{n}{\gamma}))$ NP oracles. Upon success, each $\tau_0$ is produced with probability $Q'(\tau_0)$. 
Then, let $\phi:\{0,1\}^n\rightarrow\mathbb{R}^+$ be one non-negative function, then the expectation of 
one sampled $\phi(\tau)$ satisfies,
\begin{align}
    \frac{1}{\delta}\mathbb{E}_{Q}[\phi(\tau)]\leq\mathbb{E}_{Q'}[\phi(\tau)] \leq\delta\mathbb{E}_{Q}[\phi(\tau)].\label{eq:bound_eq2}
\end{align}
\end{Th}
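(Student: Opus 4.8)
Since Theorem~\ref{Th:bound2} is quoted from \citet{ermon2013embed}, the plan is to reconstruct their ``embed-and-project'' argument. First I would split the target accuracy as $\delta=\sqrt{\delta}\cdot\sqrt{\delta}$ and reduce weighted sampling to \emph{near-uniform} sampling over an auxiliary Boolean set. Concretely, bucket the log-weights $\ln w(\tau)$ into geometric intervals of ratio $\sqrt{\delta}$ and introduce $m$ fresh binary variables together with extra threshold/parity clauses so that, for every $\tau$ with $I_C(\tau)=1$, the number of valid completions $(\tau,y)$ in an augmented set $S\subseteq\{0,1\}^{n+m}$ is within a factor $\sqrt{\delta}$ of being proportional to $w(\tau)$; here $m=O(n)$ as long as the dynamic range $\ln(w_{\max}/w_{\min})$ is polynomially bounded (otherwise one truncates the tails). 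The augmented membership test (the original $C(\tau)$ plus the bucketing clauses) is still an NP query, so it suffices to draw $x\in S$ with per-element probability within $\sqrt{\delta}$ of $1/|S|$ and then project $x$ to its first $n$ coordinates.

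The core is universal hashing. For each level $k=0,1,\dots,n+m$ I would draw a random parity map $A\in\{0,1\}^{k\times(n+m)}$, $b\in\{0,1\}^{k}$ from a pairwise-independent family and form the cell $S_{k,A,b}=\{x\in S: Ax=b \bmod 2\}$, then use the NP oracle with iteratively added blocking clauses to list up to a fixed threshold $T=T(\delta)$ elements of $S_{k,A,b}$ or certify $|S_{k,A,b}|>T$. Linearity gives $\mathbb{E}_{A,b}|S_{k,A,b}|=|S|/2^{k}$ and pairwise independence gives $\mathrm{Var}_{A,b}|S_{k,A,b}|\le\mathbb{E}_{A,b}|S_{k,A,b}|$, so a Chebyshev bound (upper tail) together with a Paley--Zygmund bound (lower tail) shows that at the pivot level $k^{\star}$ for which $|S|/2^{k^{\star}}$ lands in a constant window inside $[1,T]$, the event $1\le|S_{k^{\star},A,b}|\le T$ holds with probability at least an absolute constant $c_{0}(\delta)>0$. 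Running $O(\ln(n/\gamma))$ independent trials at each of the $O(n)$ levels (the pivot itself being located by the companion hashing-based count of $|S|$) and taking the first success pushes the failure probability below $\gamma$; each trial costs $O(T)=O(1)$ oracle calls, so the total is $O(n\ln(n/\gamma))$ NP queries, which is the first assertion.

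On a successful trial the output rule is to return a uniformly random element of the discovered cell $S_{k^{\star},A,b}$, projected to its first $n$ coordinates; call the resulting law $Q'$. I would then bound $Q'(\tau_0)$ pointwise: each valid completion $x$ of $\tau_0$ enters the chosen cell with the uniform hash probability $2^{-k^{\star}}$, conditioning on the cell size lying in the constant window distorts this by at most a factor $\sqrt{\delta}$, and the $1/|S_{k^{\star},A,b}|$ weight of the uniform-in-cell draw contributes a further factor that, after averaging over hash draws, stays within $\sqrt{\delta}$ of $1/|S|$. Combining with the embedding guarantee $(\#\text{completions of }\tau_0)/|S|\in[\,Q(\tau_0)/\sqrt{\delta},\ \sqrt{\delta}\,Q(\tau_0)\,]$ yields $Q(\tau_0)/\delta\le Q'(\tau_0)\le\delta\,Q(\tau_0)$ for every $\tau_0$. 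The expectation bound \eqref{eq:bound_eq2} is then immediate: since $\phi\ge 0$, summing the pointwise inequalities gives $\mathbb{E}_{Q'}[\phi]=\sum_{\tau_0}Q'(\tau_0)\phi(\tau_0)\le\delta\sum_{\tau_0}Q(\tau_0)\phi(\tau_0)=\delta\,\mathbb{E}_{Q}[\phi]$, and symmetrically $\mathbb{E}_{Q'}[\phi]\ge\tfrac{1}{\delta}\mathbb{E}_{Q}[\phi]$.

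The main obstacle is the probabilistic bookkeeping in the middle step: proving that a pairwise-independent parity hash really does leave, at some identifiable pivot level, a cell whose size sits in a bounded constant window with constant probability (the two-sided second-moment concentration, plus the existence and reliable location of that pivot), and then threading the three $\sqrt{\delta}$ slacks --- from discretization, from the conditioning window, and from the uniform-in-cell rule --- so that they compose to exactly $\delta$, all while keeping the oracle budget at $O(n\ln(n/\gamma))$ after a union bound over the $O(n)$ levels and the repetition-based amplification of the success probability.
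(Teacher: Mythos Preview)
The paper does not supply its own proof of Theorem~\ref{Th:bound2}: the result is explicitly attributed to \citet{ermon2013embed}, and the text immediately preceding the theorem statement says ``We refer the readers to \cite{ermon2013embed,fan2021xorcd,fan2021xorsgd} for the details on the discretization scheme and the choice of the parameters of XOR-sampling to obtain the bound in Theorem~\ref{Th:bound2}.'' The appendix contains proofs only of Theorems~1,~3, and~5. So there is no in-paper proof to compare your proposal against.

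That said, your reconstruction is faithful to the embed-and-project argument of the cited source: the geometric discretization of weights into an auxiliary Boolean set, the pairwise-independent parity hash with second-moment concentration (Chebyshev/Paley--Zygmund) to locate a pivot level with bounded cell size, amplification by independent repetition to hit the $1-\gamma$ success probability with $O(n\ln(n/\gamma))$ oracle calls, and finally the pointwise sandwich $Q(\tau_0)/\delta\le Q'(\tau_0)\le\delta\,Q(\tau_0)$ from which the expectation bound follows by nonnegativity of $\phi$. The only place to be careful is your accounting of the three $\sqrt{\delta}$ slacks: in \citet{ermon2013embed} the discretization error and the hashing error are parameterized separately and then tuned so that the product is the user-specified $\delta$, rather than each being literally $\sqrt{\delta}$; and the assumption ``$m=O(n)$ provided the dynamic range is polynomially bounded'' is exactly the sort of side condition the original paper makes explicit but this paper suppresses. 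These are bookkeeping points, not gaps in the argument.
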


The detailed procedure of X-MEN is shown in Algorithm \ref{alg:X-MEN}. Here we demonstrate the version of X-MEN, where the only parameter to optimize is $\theta$. 
A variant of this algorithm can be developed which back-propagate the gradient over the feature vector $f(s,a)$ as well, if $f(s,a)$ is represented as a neural network and also can be updated during learning.
X-MEN takes as inputs the feature vector $f(s,a)$, transition probability $D(\tau)$, constraint set $C(\tau)$, training data $\{\tau_i\}_{i=1}^N$, initial model parameter $\theta_0$, 
the learning rate $\eta$, the number of SGD iterations $K$,
XOR-Sampling parameters $(\delta, \gamma)$, and batch sizes $M_1$, $M_2$, and outputs the averaged learned parameter $\overline{\theta_{K}}$. 
To approximate $\mathbb{E}_{P}[\nabla_{\theta}R_{\theta}(\tau)]$ at the $k$-th iteration, X-MEN draws $2M_2$ samples $\tau'_1, \dots, \tau'_{2M_2}$ from the proposal distribution $Q(\tau|{\theta})$ using XOR-Sampling, where $M_2$ is a user-determined sample size.
Because XOR-Sampling has a failure rate, X-MEN repeatedly call XOR-Sampling until all $2M_2$ samples are obtained successfully (line 3 -- 8). Then, X-MEN also draws $M_1$ samples from the training set $\{\tau_i\}_{i=1}^N$ uniformly at random to approximate $\mathbb{E}_{\mathcal{D}}[\nabla_{\theta}R(\theta)]$. 
Once all the samples are obtained, X-MEN uses 
$g_k = \frac{1}{M_1}\sum_{\tau\in \mathcal{D}_{M_1}} f(\tau)-\frac{\sum_{j=1}^{M_2}D(\tau'_j)f(\tau'_j)}{\sum_{j=M_2+1}^{2M_2}D(\tau'_j)}$ as an approximation for the gradient of the negative log likelihood. 
$\theta$ is updated following the rule $\theta_{k+1} = \theta_{k}-\eta g_k$ for  $K$ steps, where $\eta$ is the learning rate. Finally, the average of $\theta_1, \ldots, \theta_K$, namely  $\overline{\theta_{K}}=\frac{1}{K}\sum_{k=1}^{K}\theta_k$ is the output of the algorithm.
We show in the next sections that X-MEN enjoys the property of convergence to the global optimum of the objective in linear number of iterations, and illustrate how to incorporate XOR-Sampling into our framework for sample generation with strict constraint satisfaction.

\begin{algorithm}[t!]
   \caption{XOR Maximum Entropy Constrained Inverse Reinforcement Learning (X-MEN)}
   \label{alg:X-MEN}
   \LinesNumbered
   \KwIn{$\theta_0, f(s, a), K, \eta, \delta, \gamma, D(\tau), C(\tau), M_1,M_2,\mathcal{D}$.}
   \For{$k=0$ {\bfseries to} $K$}{
        $j\gets 1$     \tcp*[f]{\text{$M_1$ and $M_2$ are batch size}}\\
        \While{$j\leq 2M_2$}{
            $\tau'\gets$ XOR-Sampling$\left(e^{-{\theta_k}^T f(\tau)}, C(\tau), \delta, \gamma\right)$ 
            \If{$\tau' \neq Failure$}  {
                $\tau'_j\gets \tau'$; $j\gets j+1$ 
            }
      }
      Get samples~~ $\mathcal{D}_{M_1}=\{\tau_j\}_{j=1}^{M_1}$ from $\mathcal{D}$.\\
      $g_k=\frac{1}{M_1}\sum_{\tau\in \mathcal{D}_{M_1}} f(\tau)-\frac{\sum_{j=1}^{M_2}D(\tau'_j)f(\tau'_j)}{\sum_{j=M_2+1}^{2M_2}D(\tau'_j)}$\\
      Update the parameters~~ $\theta_{k+1} = \theta_{k}-\eta g_k$
    }
    \textbf{return}$~~\overline{\theta_{K}}=\frac{1}{K}\sum_{k=1}^{K}\theta_k$
\end{algorithm}

\subsection{Linearly Converge to the Global Optimum}
Suppose the only parameter to learn is $\theta$, in other words, $f(x,a)$ are fixed, 
the reward function $R_{\theta}(\tau)$ is represented by a linear combination of hand-crafted features), 
we can easily see that the objective is convex with regard to $\theta$. Under this circumstance, we show that X-MEN converges to the global optimum of the log likelihood function in addition to a vanishing term. 
Moreover, the speed of the convergence is linear 
with respect to the number of stochastic gradient descent steps. 
Denote $Var_{\mathcal{D}}(f(\tau)) = \mathbb{E}_{\mathcal{D}}[||f(\tau)||_2^2] - ||\mathbb{E}_{\mathcal{D}}[f(\tau)]||_2^2$ and 
$Var_{P}(f(\tau)) = \mathbb{E}_{{P}}[||f(\tau)||_2^2] - ||\mathbb{E}_{{P}}[f(\tau)]||_2^2$ as the total variations of $f(\tau)$ w.r.t. the data distribution $P_{\mathcal{D}}$ and model distribution $P(\tau|\theta,T)$.
The precise mathematical theorem states:
\begin{Th}\label{Th:main}
(main)~ Let $P(\tau|\theta,T)$ and $Q(\tau|{\theta})$ as defined in Equation \ref{eq:constarined_p} and \ref{eq:Q},  $R_{\theta}(\tau)=\theta^Tf(\tau)$. 
Given trajectories $\mathcal{D}=\{\tau_i\}_{i=1}^N$ and the objective function $L(\theta)$,  denote $OPT=\min_{\theta} L(\theta)$ and $\theta^*=\text{argmin}_{\theta}L(\theta)$. 
Let $Var_{\mathcal{D}}(f(\tau))\leq\sigma_1^2$ and $\max_{\theta}Var_{P}(f(\tau))\leq \sigma_2^2$. 
Suppose $1\leq\delta\leq\sqrt{2}$ is used in XOR-sampling, the learning rate $\eta\leq \frac{2-\delta^2}{\sigma_2^2\delta}$, and $\overline{\theta_K}$ is the output of X-MEN. We have: 
\begin{align*}
     \mathbb{E}[L(\overline{\theta_K})]-OPT \leq\frac{\delta^2||\theta_0-\theta^*||_2^2}{2\eta K}+\frac{\eta\sigma_1^2}{\delta^2 M_1}+\frac{\eta\sigma_2^2}{\delta^2M_2}.
\end{align*}
\end{Th}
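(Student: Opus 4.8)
The plan is to cast X-MEN as a stochastic gradient descent method on the convex objective $L(\theta)$ and then invoke the standard SGD convergence analysis for convex functions, but with a crucial modification: the gradient estimate $g_k$ is not unbiased, because XOR-sampling only guarantees the multiplicative sandwich bound of Theorem~\ref{Th:bound2}. So the first step is to quantify the bias. By Theorem~\ref{Th:compute_g}, the quantity $g_\theta$ built from exact samples of $Q$ is unbiased for $\nabla_\theta L(\theta)$; when instead we draw from $Q'$ (the XOR-sampling distribution) and plug into the same ratio estimator, Theorem~\ref{Th:bound2} applied coordinatewise to $\phi = D(\tau) f_i(\tau)^\pm$ (splitting each feature coordinate into positive and negative parts) and to $\phi = D(\tau)$ shows that each coordinate of $\mathbb{E}_{Q'}[g_k]$ lies within a factor between $1/\delta^2$ and $\delta^2$ of the corresponding coordinate of $\nabla_\theta L(\theta)$ (the numerator contributes one factor of $\delta$, the denominator another). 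I would record this as a lemma: writing $\hat g_k = \mathbb{E}[g_k \mid \theta_k]$, we have $\hat g_k = \nabla L(\theta_k) + b_k$ with a controlled bias, and moreover $\langle \hat g_k, \nabla L(\theta_k)\rangle \ge \frac{1}{\delta^2}\|\nabla L(\theta_k)\|^2$ (componentwise same-sign bounds give this, since $\delta^2 \le 2$ keeps things well-behaved).

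Next I would run the familiar one-step descent inequality. Expand
\begin{align*}
\|\theta_{k+1} - \theta^*\|_2^2 = \|\theta_k - \theta^*\|_2^2 - 2\eta \langle g_k, \theta_k - \theta^*\rangle + \eta^2 \|g_k\|_2^2,
\end{align*}
take conditional expectation given $\theta_k$, and split $\mathbb{E}\|g_k\|^2 = \|\hat g_k\|^2 + \operatorname{Var}(g_k)$. For the inner-product term, use convexity of $L$ to write $\langle \nabla L(\theta_k), \theta_k - \theta^*\rangle \ge L(\theta_k) - OPT$, and handle the bias contribution $\langle b_k, \theta_k - \theta^*\rangle$ using the same-sign bound so that the biased gradient still makes progress toward $\theta^*$ — this is where the factor $\delta^2$ enters the $\|\theta_0 - \theta^*\|^2$ term of the bound. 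For the variance term, $g_k$ is a difference of a demonstration-sample average (over $M_1$ i.i.d.\ draws) and a self-normalized importance estimate (over $2M_2$ draws); bounding its variance by $\sigma_1^2/M_1 + \sigma_2^2/M_2$ up to the $\delta$ factors uses $\operatorname{Var}_{\mathcal D}(f) \le \sigma_1^2$, $\max_\theta \operatorname{Var}_P(f) \le \sigma_2^2$, and again Theorem~\ref{Th:bound2} to transfer the variance bound from $Q$ to $Q'$; the condition $\eta \le (2-\delta^2)/(\sigma_2^2 \delta)$ is exactly what is needed to absorb the $\eta^2 \|\hat g_k\|^2$ term into the $-2\eta(L(\theta_k)-OPT)/\delta^2$ progress term.

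Then I would rearrange to get
\begin{align*}
2\eta\bigl(\mathbb{E}[L(\theta_k)] - OPT\bigr) \le \delta^2\bigl(\mathbb{E}\|\theta_k-\theta^*\|^2 - \mathbb{E}\|\theta_{k+1}-\theta^*\|^2\bigr) + \eta^2\bigl(\tfrac{\sigma_1^2}{M_1} + \tfrac{\sigma_2^2}{M_2}\bigr)\cdot(\text{const in }\delta),
\end{align*}
sum over $k = 0, \dots, K-1$, telescope the distance terms (dropping the final nonnegative one), divide by $2\eta K$, and apply Jensen's inequality via convexity of $L$ to pull the average inside: $L(\overline{\theta_K}) \le \frac1K\sum_k L(\theta_k)$. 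That yields the stated bound $\frac{\delta^2\|\theta_0-\theta^*\|^2}{2\eta K} + \frac{\eta\sigma_1^2}{\delta^2 M_1} + \frac{\eta\sigma_2^2}{\delta^2 M_2}$, with the $\delta^{\pm2}$ placements coming out of the bias/variance transfer.

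The main obstacle I anticipate is the bias handling: ordinary SGD proofs lean on $\mathbb{E}[g_k] = \nabla L(\theta_k)$, and here we only have a multiplicative, coordinatewise sandwich. The delicate point is showing that $\langle \hat g_k, \theta_k - \theta^*\rangle$ still lower-bounds $L(\theta_k) - OPT$ up to the right constant — this requires that the per-coordinate distortion preserves the sign of each coordinate of the gradient and that combining the distorted coordinates does not destroy the convexity-based descent guarantee. Getting the constants to land precisely as $\delta^2$ and $1/\delta^2$ (rather than looser powers), and verifying that $\delta \le \sqrt 2$ together with the stated learning-rate cap is exactly the regime where the recursion contracts, is the technical heart of the argument; the rest is the standard telescoping-SGD bookkeeping.
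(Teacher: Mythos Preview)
Your proposal is correct in substance but takes a more hands-on route than the paper. The paper's proof is short and modular: it (i) proves a separate lemma that $L(\theta)$ is $\sigma_2^2$-smooth whenever $\max_\theta Var_P(f(\tau))\le\sigma_2^2$ (by bounding the trace of the Hessian, which equals the covariance of $f(\tau)$ under $P$), (ii) verifies the coordinatewise multiplicative bounds $\frac{1}{\delta^2}[\nabla L(\theta_k)]^+\le\mathbb{E}[g_k^+]\le\delta^2[\nabla L(\theta_k)]^+$ and the analogous negative-part bounds, exactly as you do via Theorem~\ref{Th:bound2} applied to numerator and denominator, (iii) bounds $Var(g_k)\le\sigma_1^2/M_1+\sigma_2^2/M_2$, and then (iv) simply \emph{invokes Theorem~\ref{Th:XOR_SGD_bound} as a black box}. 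That cited result from \cite{fan2021xorcd} already packages the entire biased-SGD analysis (one-step expansion, absorbing $\eta^2\|\hat g_k\|^2$ via smoothness and the step-size cap, telescoping, Jensen on $\overline{\theta_K}$) under precisely the multiplicative-sandwich hypothesis you identify. So what you call ``the technical heart of the argument'' --- handling $\langle\hat g_k,\theta_k-\theta^*\rangle$ when the gradient is only coordinatewise $\delta^2$-distorted --- the paper does not reprove at all; it is delegated to Theorem~\ref{Th:XOR_SGD_bound}.

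In short: you plan to reprove Theorem~\ref{Th:XOR_SGD_bound} inline, while the paper just checks its hypotheses and cites it. Your approach buys self-containment; theirs buys brevity. One small gap in your outline: you use $\sigma_2^2$-smoothness of $L$ implicitly (when you say the learning-rate cap lets you absorb $\eta^2\|\hat g_k\|^2$), but never state or prove it. The paper isolates this as a standalone lemma, and you would need to do the same --- the argument is that $\nabla^2 L(\theta)$ is the covariance matrix of $f(\tau)$ under $P(\cdot\,|\,\theta,T)$, whose operator norm is bounded by its trace $Var_P(f(\tau))\le\sigma_2^2$.
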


X-MEN is the first provable algorithm which converges to the global optimum of the likelihood function and a tail term for constrained inverse reinforcement learning problems. Moreover, the rate of the  convergence is linear in the number of SGD iterations $K$. Previous approaches for IRL problems with hard combinatorial constraints do not have such tight bounds. 

The main challenge to prove Theorem~\ref{Th:main}
lies in the fact that we cannot ensure the unbiasedness of the 
gradient estimator. 
Because the objective is convex with respect to $\theta$ and smooth, a gradient descent algorithm can be proven 
to be linearly convergent towards the optimal value if the 
expectation of the estimated gradient is unbiased, ie, $\mathbb{E}[g_k] = \nabla_{\theta} L(\theta_k)$. 
However, even though we apply XOR-sampling, which has
a constant approximation bound in generating 
samples from the model distribution, 
we still cannot guarantee the unbiasedness of $g_k$. 
Instead, using the constant factor approximation of XOR-Sampling, which is formally stated in Theorem~\ref{Th:bound2}, the bound
for $g_{k}$ is in the following form:
\begin{align}
    \frac{1}{\delta^2} [\nabla L(\theta_k)]^+ \leq \mathbb{E}[g_k^+]
    \leq \delta^2 [\nabla L(\theta_k)]^+,\label{eq:gb1}\\
    \delta^2 [\nabla L(\theta_k)]^- \leq \mathbb{E}[g_k^-]
    \leq \frac{1}{\delta^2} [\nabla L(\theta_k)]^-.\label{eq:gb2}
\end{align}
Here, $\delta>1$ is a constant factor, $[f]^+$ means the positive part of $f$, ie,
$[f]^+ = \max\{f, \mathbf{0}\}$, and $[f]^-$ means the negative part of $f$, ie,
$[f]^- = \min\{f, \mathbf{0}\}$. 
The bound in Equation~\ref{eq:gb1} and \ref{eq:gb2}
can be proven by bounding the nominator and the denominator of Equation \ref{eq:g_theta}, and we leave the proof in the supplementary materials.  

The proof of Theorem \ref{Th:main} relies mainly on the following Theorem \ref{Th:XOR_SGD_bound} which bounds the errors of Stochastic Gradient
Descent (SGD) algorithms which only have
access to constant approximate gradient vectors. 
Theorem~\ref{Th:XOR_SGD_bound} was proved in \cite{fan2021xorcd}, to help bound the errors of learning an exponential family model.
Theorem \ref{Th:XOR_SGD_bound} requires function 
$f$ to be $L$-smooth. $f(\theta)$ is $L$-smooth if and only if $||f(\theta_1) - f(\theta_2)||_2 \leq L ||\theta_1 - \theta_2||_2$. 
Notice that the conditions of Theorem \ref{Th:main}
automatically guarantee the $L$-smoothness of the objective and we leave the proof in the appendix.

\begin{Th}\label{Th:XOR_SGD_bound}\citep{fan2021xorcd}~
Let $f:\mathbb{R}^d\rightarrow \mathbb{R}$ be a $L$-smooth convex function and $\theta^*=\text{argmin}_{\theta} f(\theta)$. In iteration $k$ of SGD, $g_k$ is the estimated gradient, i.e., $\theta_{k+1}=\theta_{k}-\eta g_k$. If $Var(g_k)\leq \sigma^2$, and there exists $1\leq c\leq\sqrt{2}$ s.t. $\frac{1}{c}[\nabla f(\theta_k)]^+ \leq \mathbb{E}[g_k^+]\leq c[\nabla f(\theta_k)]^+$ and $c[\nabla f(\theta_k)]^- \leq \mathbb{E}[g_k^-]\leq \frac{1}{c}[\nabla f(\theta_k)]^-$, then for any $K>1$ and step size $\eta\leq \frac{2-c^2}{Lc}$, let $\overline{\theta_K}=\frac{1}{K}\sum_{k=1}^K \theta_k$, we have 
\begin{align}\label{eq:XOR_SGD_bound}
    \mathbb{E}[f(\overline{\theta_K})]-f(\theta^*)\leq \frac{c||\theta_0-\theta^*||_2^2}{2\eta K}+\frac{\eta\sigma^2}{c}.
\end{align}
\end{Th}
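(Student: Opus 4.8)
The plan is to track the squared distance to the optimum, $\Phi_k := \|\theta_k-\theta^*\|_2^2$, and to derive a one-step inequality of the form $\mathbb{E}[\Phi_{k+1}\mid\theta_k]\le \Phi_k - a\eta\big(f(\theta_k)-f(\theta^*)\big)+\eta^2\sigma^2$ for a suitable constant $a>0$, which after telescoping from $k=1$ to $K$ and dividing by $K$ controls the running average of the suboptimalities; convexity of $f$ (Jensen's inequality) then transfers this to $\mathbb{E}[f(\overline{\theta_K})]-f(\theta^*)$. First I would expand $\Phi_{k+1}=\|\theta_k-\eta g_k-\theta^*\|_2^2=\Phi_k-2\eta\langle g_k,\theta_k-\theta^*\rangle+\eta^2\|g_k\|_2^2$ and take the conditional expectation given $\theta_k$, so that $g_k$ is replaced by $\mathbb{E}[g_k]$ in the cross term and by its second moment in the last term.

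The core of the argument is the cross term $\langle\mathbb{E}[g_k],\theta_k-\theta^*\rangle$. Reading the two sandwich hypotheses coordinatewise, for each coordinate $i$ the sign of $\mathbb{E}[g_k]_i$ matches that of $\nabla f(\theta_k)_i$ and $\tfrac1c|\nabla f(\theta_k)_i|\le|\mathbb{E}[g_k]_i|\le c|\nabla f(\theta_k)_i|$; equivalently $\mathbb{E}[g_k]=A\nabla f(\theta_k)$ for a diagonal matrix $A$ with entries in $[\tfrac1c,c]$. I would then combine the decomposition $\nabla f(\theta_k)=[\nabla f(\theta_k)]^+ +[\nabla f(\theta_k)]^-$ with the corresponding bounds on $\mathbb{E}[g_k^+]$ and $\mathbb{E}[g_k^-]$ together with convexity, $\langle\nabla f(\theta_k),\theta_k-\theta^*\rangle\ge f(\theta_k)-f(\theta^*)\ge 0$, with the aim of extracting a factor $\tfrac1c$ and establishing $\langle\mathbb{E}[g_k],\theta_k-\theta^*\rangle\ge\tfrac1c\big(f(\theta_k)-f(\theta^*)\big)$. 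This is the step that forces the extra factor $c$ into the numerator of the final bound.

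For the second-moment term I would write $\mathbb{E}[\|g_k\|_2^2\mid\theta_k]=\|\mathbb{E}[g_k]\|_2^2+Var(g_k)\le c^2\|\nabla f(\theta_k)\|_2^2+\sigma^2$, using the upper sandwich for $\|\mathbb{E}[g_k]\|_2^2\le c^2\|\nabla f(\theta_k)\|_2^2$ and the variance hypothesis. Then $L$-smoothness and convexity give $\|\nabla f(\theta_k)\|_2^2\le 2L\big(f(\theta_k)-f(\theta^*)\big)$, so the surplus produced by the bias is again proportional to the suboptimality. Substituting both bounds into the one-step inequality yields a net coefficient on $f(\theta_k)-f(\theta^*)$ of the form $\tfrac{2\eta}{c}-2L\eta^2c^2$ (up to the exact bookkeeping), and the step-size restriction $\eta\le\tfrac{2-c^2}{Lc}$ — which in particular forces $c\le\sqrt2$ — is exactly what keeps this coefficient bounded below by the quantity needed so that telescoping produces $\tfrac{c\|\theta_0-\theta^*\|_2^2}{2\eta K}$ while the residual noise collapses to $\tfrac{\eta\sigma^2}{c}$. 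A final application of Jensen's inequality, $f(\overline{\theta_K})\le\tfrac1K\sum_{k=1}^K f(\theta_k)$, completes the bound.

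The main obstacle is precisely the biasedness: because $\mathbb{E}[g_k]\ne\nabla f(\theta_k)$, the clean identity $\mathbb{E}\langle g_k,\theta_k-\theta^*\rangle=\langle\nabla f(\theta_k),\theta_k-\theta^*\rangle$ used in ordinary SGD fails, and the coordinatewise rescaling $A$ does not pass through the inner product as a single multiplicative constant — the rescaling interacts with the uncontrolled signs of the components of $\theta_k-\theta^*$. The delicate point is therefore to show that splitting into positive and negative parts, together with convexity and the restriction $c\le\sqrt2$, still yields a one-sided bound strong enough to recover the $O(1/K)$ suboptimality rate rather than merely a gradient-norm guarantee; making the two hypotheses on $[\,\cdot\,]^+$ and $[\,\cdot\,]^-$ combine so that the constants line up with the stated bound is where the careful accounting lives.
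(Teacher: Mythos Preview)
The paper does not contain a proof of this theorem: it is quoted as a black box from \citep{fan2021xorcd} and then \emph{applied} in the appendix to the specific objective $L(\theta)$ (after verifying the smoothness hypothesis via Lemma~\ref{Lm:L-smooth} and the approximate-gradient bounds (\ref{eq:gb1})--(\ref{eq:gb2}) for that particular problem). There is therefore no in-paper argument to compare your proposal against.

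On the merits of your sketch: the Lyapunov structure, the use of $\|\nabla f(\theta_k)\|_2^2\le 2L\bigl(f(\theta_k)-f(\theta^*)\bigr)$ for the second-moment term, and the final telescoping plus Jensen are all standard and correct. The step you single out as ``delicate'' is, however, a genuine gap rather than mere bookkeeping. Write $t_i=\nabla f(\theta_k)_i(\theta_k-\theta^*)_i$ and set $T^+=\sum_{t_i>0}t_i$, $T^-=\sum_{t_i<0}t_i$. Under your diagonal-rescaling description $\mathbb{E}[g_k]=A\nabla f(\theta_k)$ with $A_{ii}\in[1/c,c]$, the worst case of the cross term is $\tfrac1c\,T^+ + c\,T^-$, whereas convexity only gives $T^++T^-\ge f(\theta_k)-f(\theta^*)$. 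The bound you target, $\tfrac1c\,T^++c\,T^-\ge\tfrac1c(T^++T^-)$, is equivalent to $(c^2-1)T^-\ge 0$, which fails whenever $T^-<0$; and nothing in the hypotheses prevents $|T^-|$ from being large (already $T^+=2\Delta$, $T^-=-\Delta$ drives the cross term to zero at $c=\sqrt2$). So the inequality $\langle\mathbb{E}[g_k],\theta_k-\theta^*\rangle\ge\tfrac1c\bigl(f(\theta_k)-f(\theta^*)\bigr)$ is not a consequence of convexity plus the sandwich hypothesis alone, and the restriction $c\le\sqrt2$ does not by itself close the hole. To finish, you would need either an additional control on the mixed-sign part of $\langle\nabla f(\theta_k),\theta_k-\theta^*\rangle$, or a different organization of the one-step recursion that absorbs the bias into the $\eta^2\|\nabla f\|^2$ term --- which is presumably where the actual argument in \citep{fan2021xorcd} does the work.
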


The proof of Theorem \ref{Th:main} is to apply 
Theorem \ref{Th:XOR_SGD_bound} on the objective $L(\theta)$ and noticing that $L(\theta)$ is $L$-smooth when the total variation $Var_P(f(\tau))$ is bounded \citep{fan2021xorcd}.
Theorem~\ref{Th:main} states that in expectation, the difference between the output of X-MEN and the true optimum $OPT$ is bounded by a term that is inversely proportional to the number of iterations $K$ and a tail term $\frac{\eta\sigma_1^2}{\delta^2 M_1}+\frac{\eta\sigma_2^2}{\delta^2M_2}$.
To reduce the tail term with fixed steps $\eta$, we can generate more samples at each iteration to reduce the variance (increase $M_1$ and $M_2$). 
In addition, to quantify the computational complexity of X-MEN, 
we prove the following theorem in the supplementary materials detailing the number of queries to NP oracles needed for X-MEN.
\begin{Th}\label{Th:num_queries}
Let $|\mathcal{S}|$ and $|\mathcal{A}$ be the number of all possible states and all possible actions, respectively, then X-MEN in Algorithm \ref{alg:X-MEN} uses $O\left(K|\mathcal{S}||\mathcal{A}|\ln\frac{|\mathcal{S}||\mathcal{A}|}{\gamma}+KM_2\right)$ queries to NP oracles.
\end{Th}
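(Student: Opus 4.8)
The plan is to bound the NP-oracle calls made by the only operations in Algorithm~\ref{alg:X-MEN} that touch an oracle — the calls to XOR-Sampling in lines 3--8 — and then multiply by the number of SGD iterations. Everything else in the loop uses no oracle: drawing $\mathcal{D}_{M_1}$ uniformly from the finite list $\mathcal{D}$, forming $g_k$, and the update $\theta_{k+1}=\theta_k-\eta g_k$ are elementary arithmetic over already-available quantities, so they contribute nothing and can be discarded at the outset.

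Fix an iteration $k$. Throughout this iteration the weights $w_k(\tau)=e^{-\theta_k^{T}f(\tau)}$ and the constraint set $C(\tau)$ handed to XOR-Sampling are fixed, and the algorithm needs $2M_2$ trajectories from $Q(\cdot\,|\,\theta_k)\propto w_k$. By Theorem~\ref{Th:bound2}, one successful run of XOR-Sampling on $(w_k,C(\tau),\delta,\gamma)$ uses $O\!\left(n\ln(n/\gamma)\right)$ oracle queries with $n=|\mathcal{S}||\mathcal{A}|$; the dominant part of this cost is the search that determines how many random XOR constraints to add (and the associated discretization parameters) for this particular pair $(w_k,C(\tau))$. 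The key observation, which I would extract from the implementation details in \cite{ermon2013embed,fan2021xorcd,fan2021xorsgd} rather than from the black-box statement of Theorem~\ref{Th:bound2}, is that this search is carried out once per iteration: every further sample drawn from the \emph{same} distribution $Q(\cdot\,|\,\theta_k)$ reuses the determined constraint count and only needs $O(1)$ additional oracle queries (add that many fresh random XORs, query once). Hence the whole batch of $2M_2$ samples in iteration $k$ costs $O\!\left(n\ln(n/\gamma)+M_2\right)$ queries.

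Next I would handle the retry loop. Each attempted run succeeds independently with probability at least $1-\gamma$, so the number of attempts needed to collect $2M_2$ successes is a sum of geometric variables with mean $O\!\left(M_2/(1-\gamma)\right)$, which a Chernoff bound keeps at $O(M_2)$ with high probability once $\gamma$ is bounded away from $1$ (more generally one carries a $1/(1-\gamma)$ factor). Since every attempt after the search in an iteration costs only $O(1)$ queries by the amortization above, the failures do not change the per-iteration bound $O\!\left(n\ln(n/\gamma)+M_2\right)$. Summing over $k=0,\dots,K$ gives
\[
O\!\left(K\,n\ln\frac{n}{\gamma}+KM_2\right)=O\!\left(K|\mathcal{S}||\mathcal{A}|\ln\frac{|\mathcal{S}||\mathcal{A}|}{\gamma}+KM_2\right)
\]
oracle queries, as claimed.

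The step I expect to be the main obstacle is precisely the amortization claim: justifying that the costly $O(n\ln(n/\gamma))$ search is incurred once per SGD iteration and not once per sampled trajectory. This requires reaching inside the XOR-Sampling procedure and arguing that, for fixed weights and constraints, repeated sampling shares the search phase; if one instead only invokes Theorem~\ref{Th:bound2} per sample, the bound degrades to $O(KM_2\,n\ln(n/\gamma))$, which does not match the statement. A secondary but routine point is converting the random number of retries into the clean deterministic $O(KM_2)$ term, which the above concentration argument settles.
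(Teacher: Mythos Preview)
Your proposal is correct and follows essentially the same approach as the paper: the paper's proof likewise argues that the first sample in each SGD iteration costs $O(n\ln(n/\gamma))$ oracle queries to determine the number of XOR constraints (citing \cite{ermon2013embed}), after which that count is fixed and each of the remaining $2(M_2-1)$ samples costs a single query, yielding $O(n\ln(n/\gamma)+M_2)$ per iteration and the stated bound over $K$ iterations. Your treatment of the retry loop via a geometric/Chernoff argument is more careful than the paper, which simply ignores failed attempts, but this does not change the overall structure.
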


\section{RELATED WORK}
Max-Ent IRL models were first proposed \citep{ziebart2008maximum} to addresses the inherent ambiguity of possible reward functions and induced policies for an observed behavior, during the training of which a forward-backward dynamic programming algorithm were used to exactly compute the partition function and marginal probability \citep{snoswell2020revisiting}, assuming the knowledge of the transition probability. Relative Entropy IRL \citep{boularias2011relative} extends this work by leveraging an importance sampling approach to estimate the partition function unbiasedly without knowing the dynamics. Guided Cost Learning \citep{finn2016guided} further learns a Max-Ent model with policy optimization. Later work accommodates arbitrary nonlinear reward functions such as neural networks \citep{finn2016guided,kalweit2020deep,wulfmeier2015maximum}, instead of a linear combination of features. Recently proposed Generative Adversarial Imitation Learning (GAIL) \citep{ho2016generative} is an imitation learning method that does not require estimating likelihoods.
However, while Markovian rewards do often provide a succinct and expressive way to specify the objectives of a task, they cannot capture all possible task specifications, especially additional constraints \citep{vazquez2017learning}. Recent work on constrained IRL only focuses on local constraints of states, actions and features \citep{chou2018learning,subramani2018inferring,mcpherson2018modeling}, which can hardly represent all the real world scenarios as most constraints are trajectory long. Other methods focus on learning constraints from the demonstrations, such as maximum likelihood constraint inference \citep{scobee2019maximum,kalweit2020deep,anwar2020inverse,mcpherson2021maximum}. Our approach differs from all the existing methods and addresses the open question of learning with hard combinatorial constraints. We adapt the Max-Ent framework to allow us to reason about all the trajectories that satisfy the constraints during the contrastive learning process. Here we only consider pre-defined constraints. One should notice that even with the full knowledge of transition probability, dynamic programming cannot work well under trajectory-long constraints since it has no knowledge of any hard combinatorial information.
X-MEN was motivated by the recent proposed probabilistic inference via hashing and randomization technique for both sampling \citep{ermon2013embed,ivrii2015computing}, counting \citep{Gomes2006NearUniformSampling,ding2019towards}, and marginal inference  problems \citep{Ermon13Wish,kuck2019adaptive,Chakraborty2014DistributionAwareSA,Chakraborty2015WeightedCounting,belle2015hashing} with constant approximation guarantees. Latest work also show the success of XOR-Sampling \citep{ermon2013embed} to boost stochastic optimization algorithms \citep{fan2021xorsgd} and improve machine learning tasks on structure generation \citep{fan2021xorcd}.

\section{EXPERIMENTS}

\begin{figure*}[t]
\centering
\subfigure[reward map]{\label{fig:reward}
\includegraphics[width=0.3\linewidth]{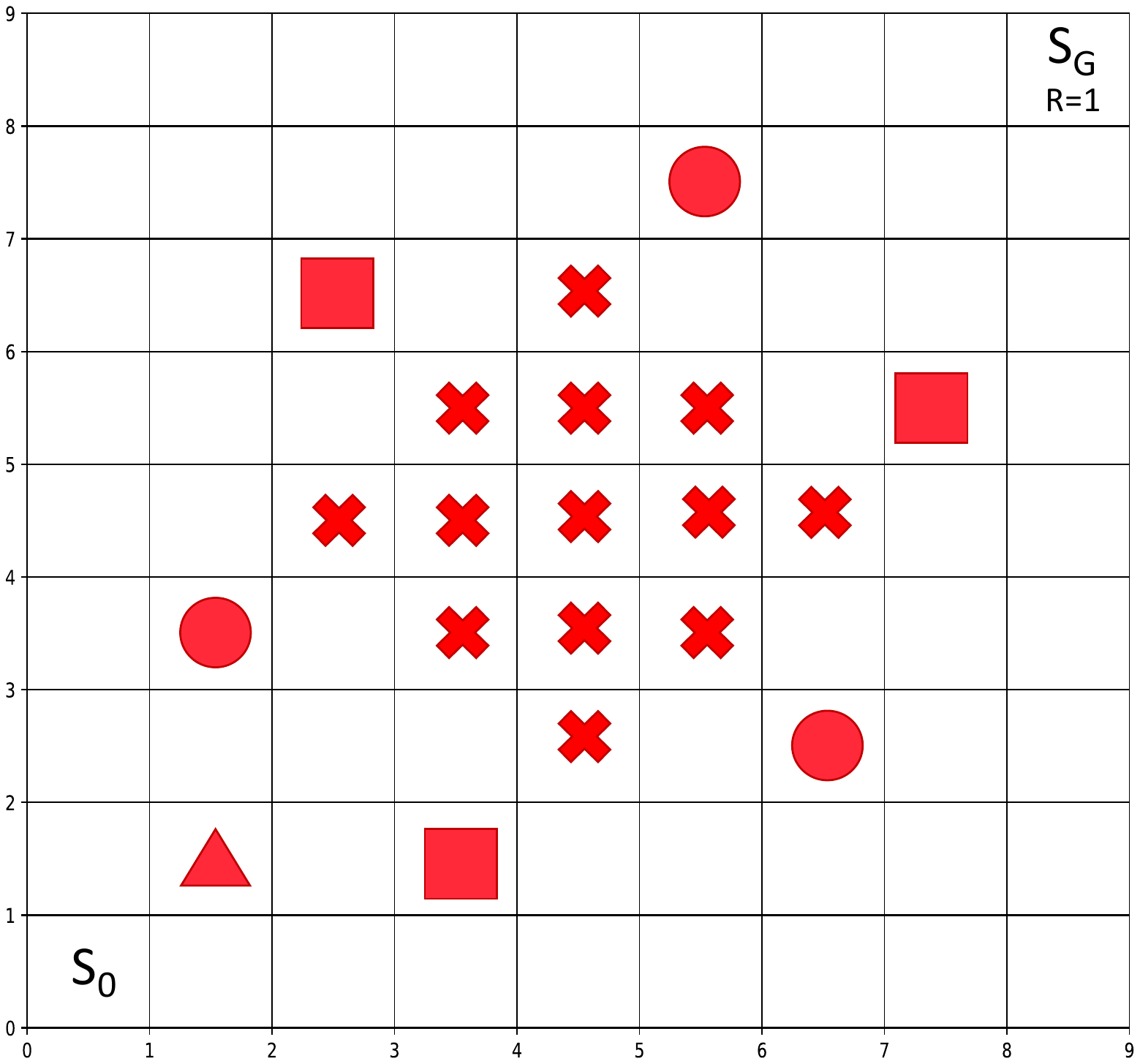}}
\subfigure[ground truth]{\label{fig:gt}
\includegraphics[width=0.3\linewidth]{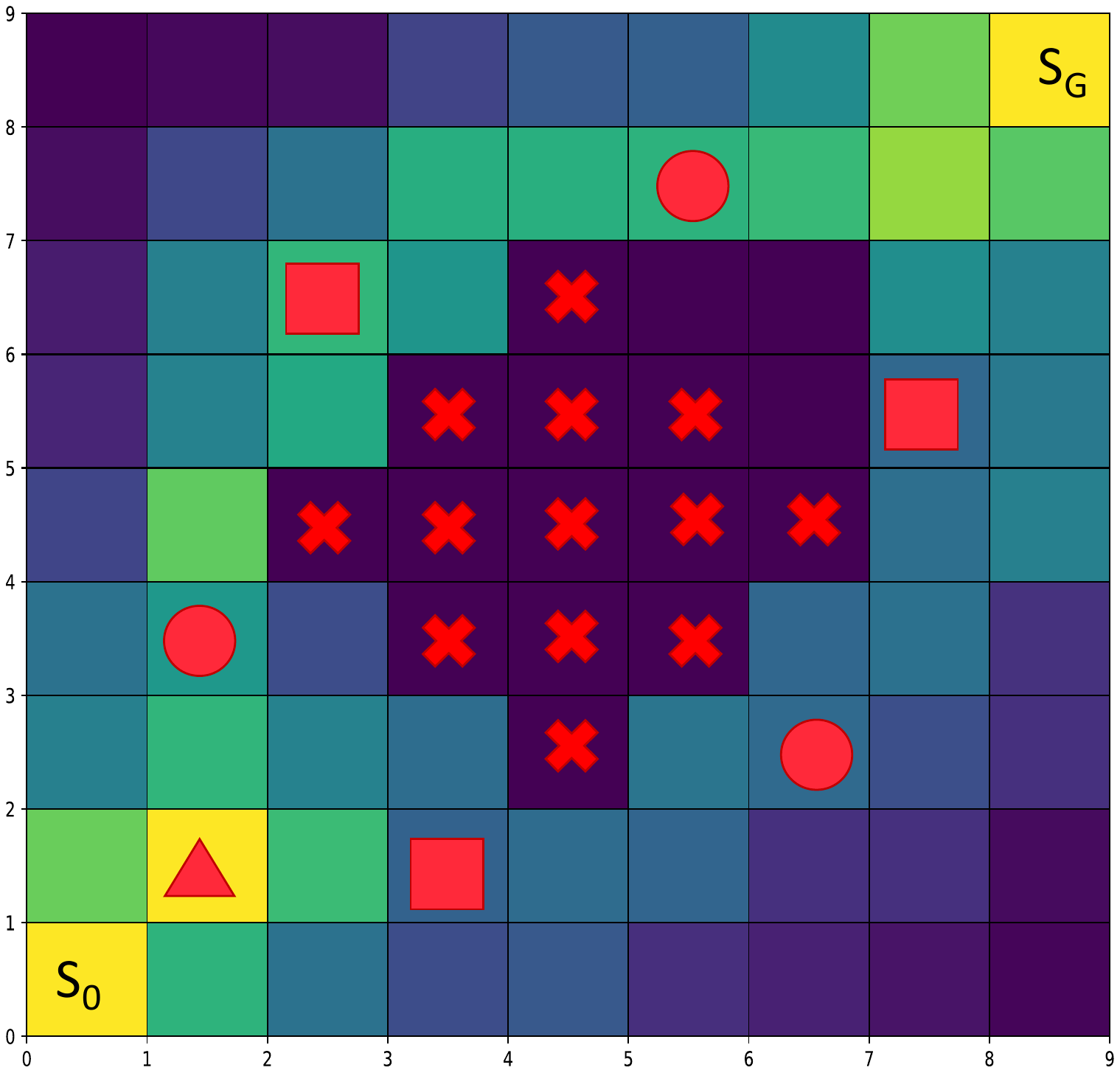}}
\subfigure[X-MEN]{\label{fig:x-men}
\includegraphics[width=0.35\linewidth]{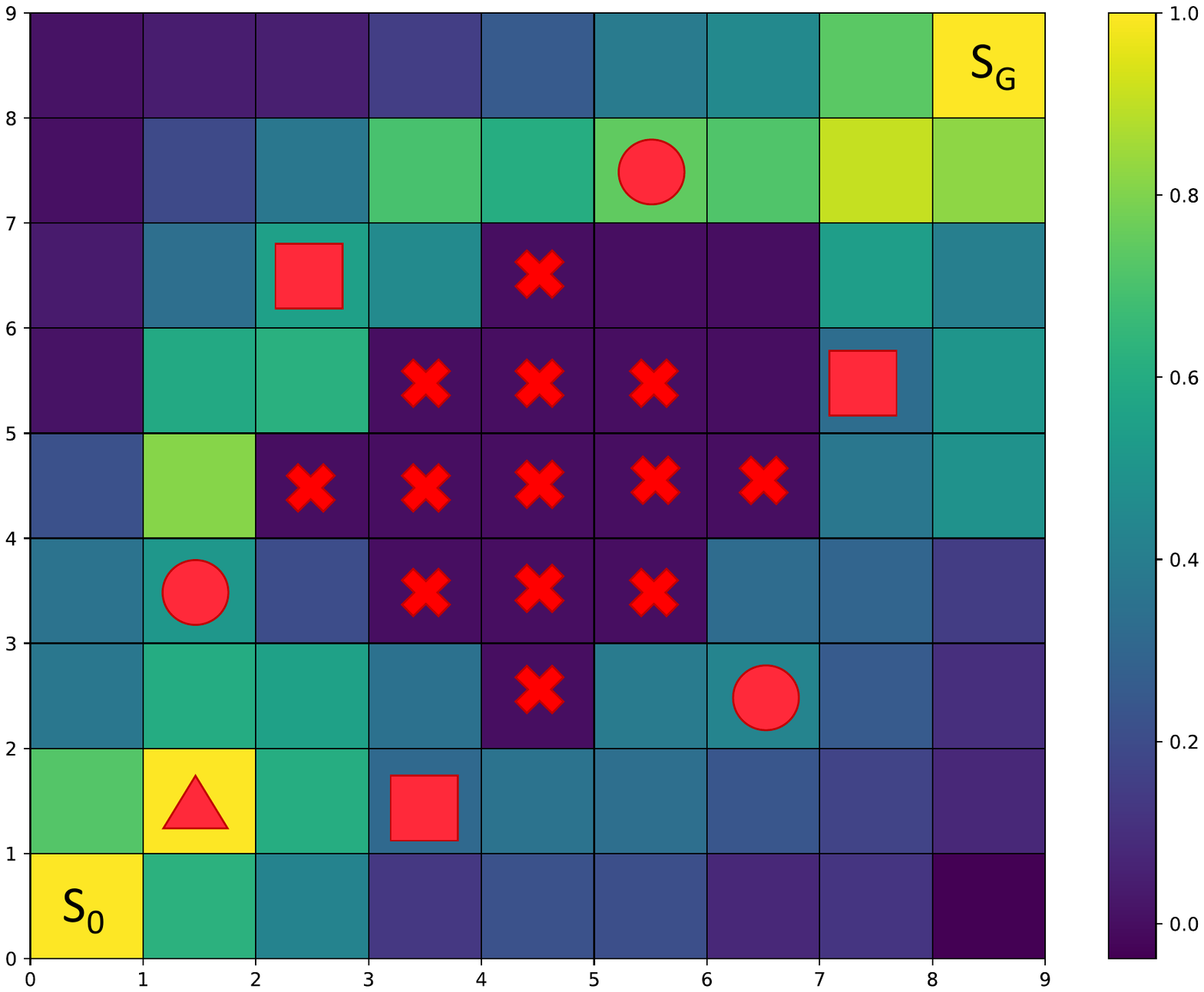}}
\subfigure[Maxent]{\label{fig:maxent}
\includegraphics[width=0.3\linewidth]{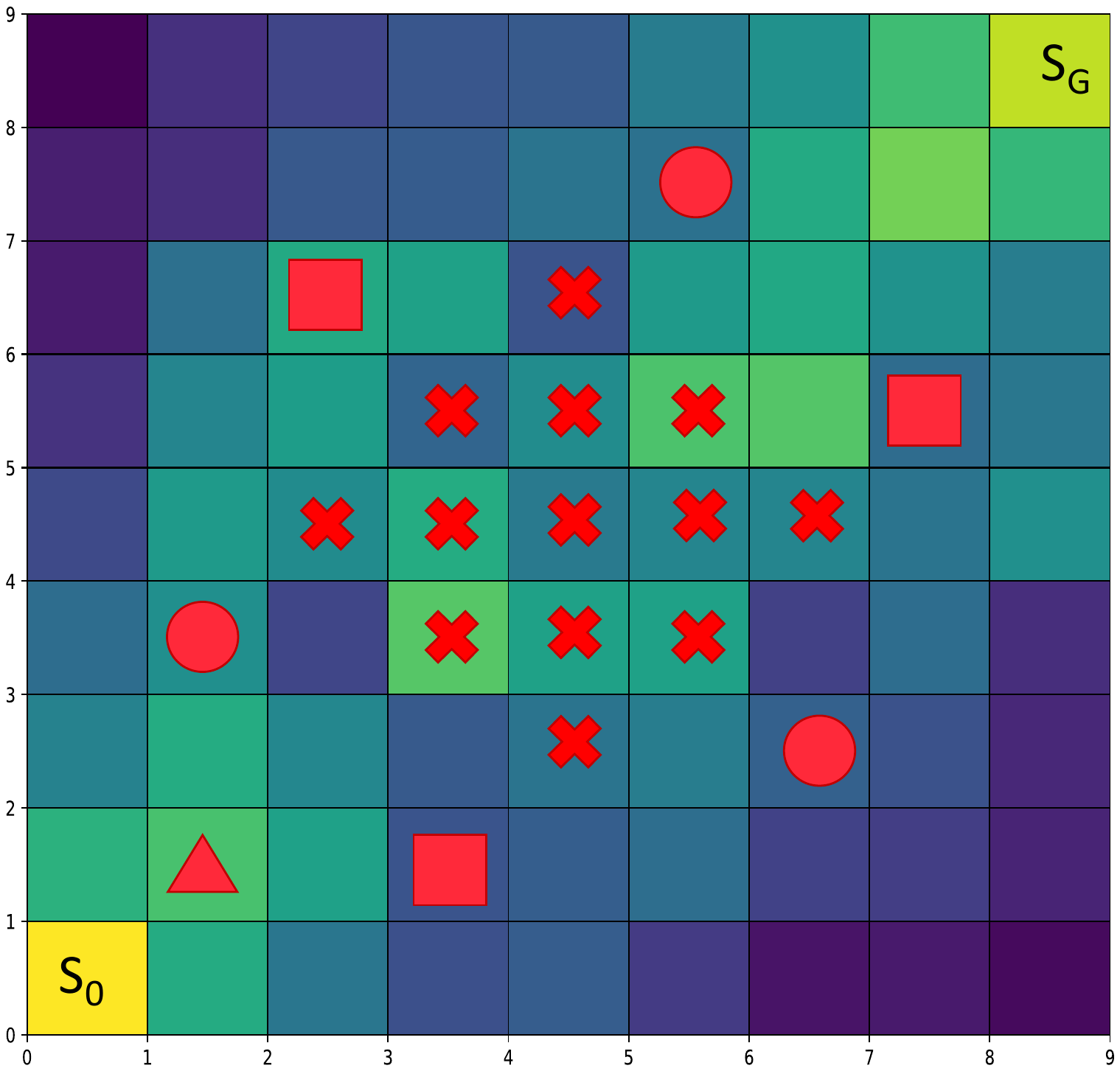}}
\subfigure[RE-IRL]{\label{fig:re}
\includegraphics[width=0.3\linewidth]{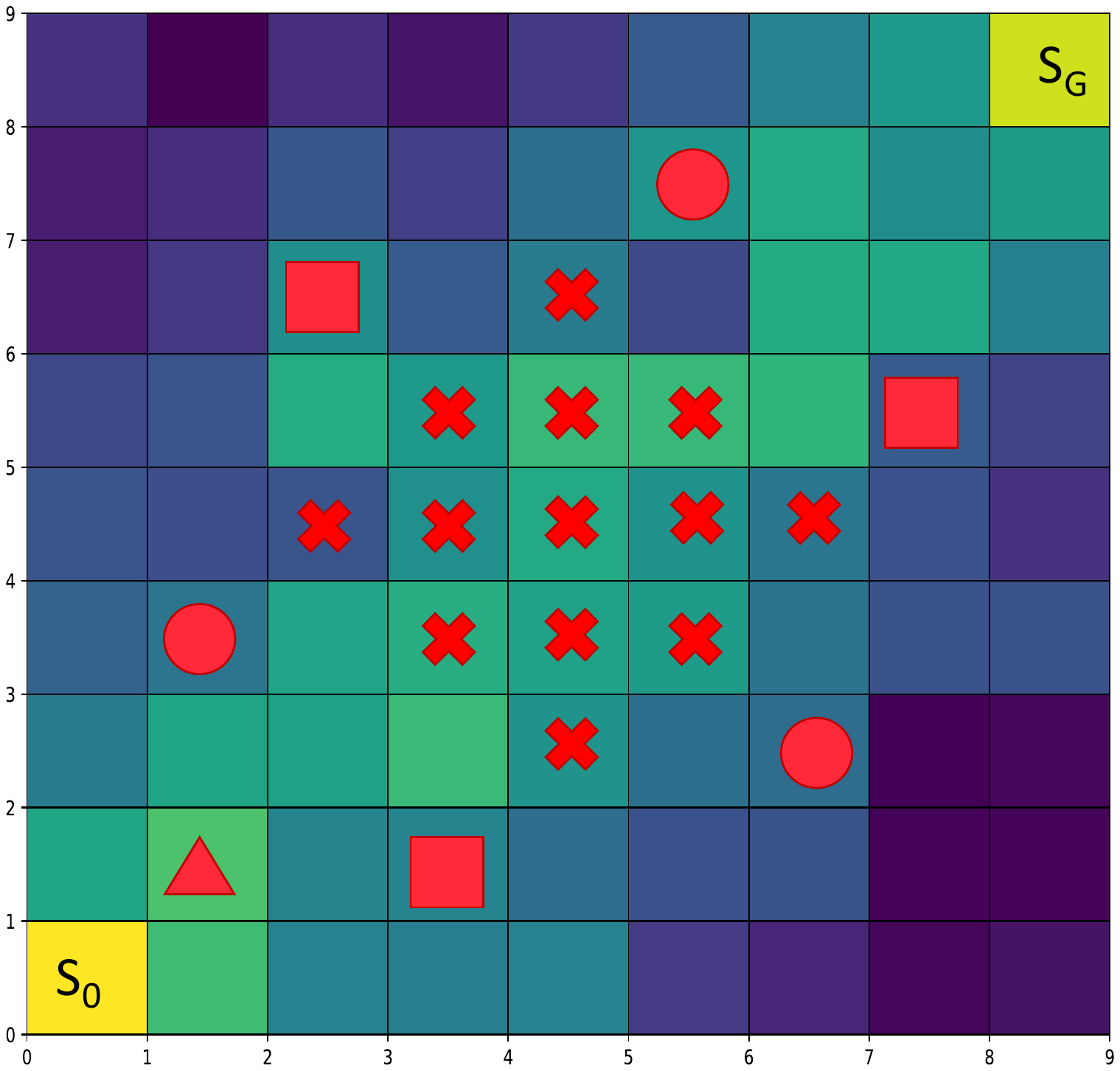}}
\subfigure[MLCI]{\label{fig:mlci}
\includegraphics[width=0.35\linewidth]{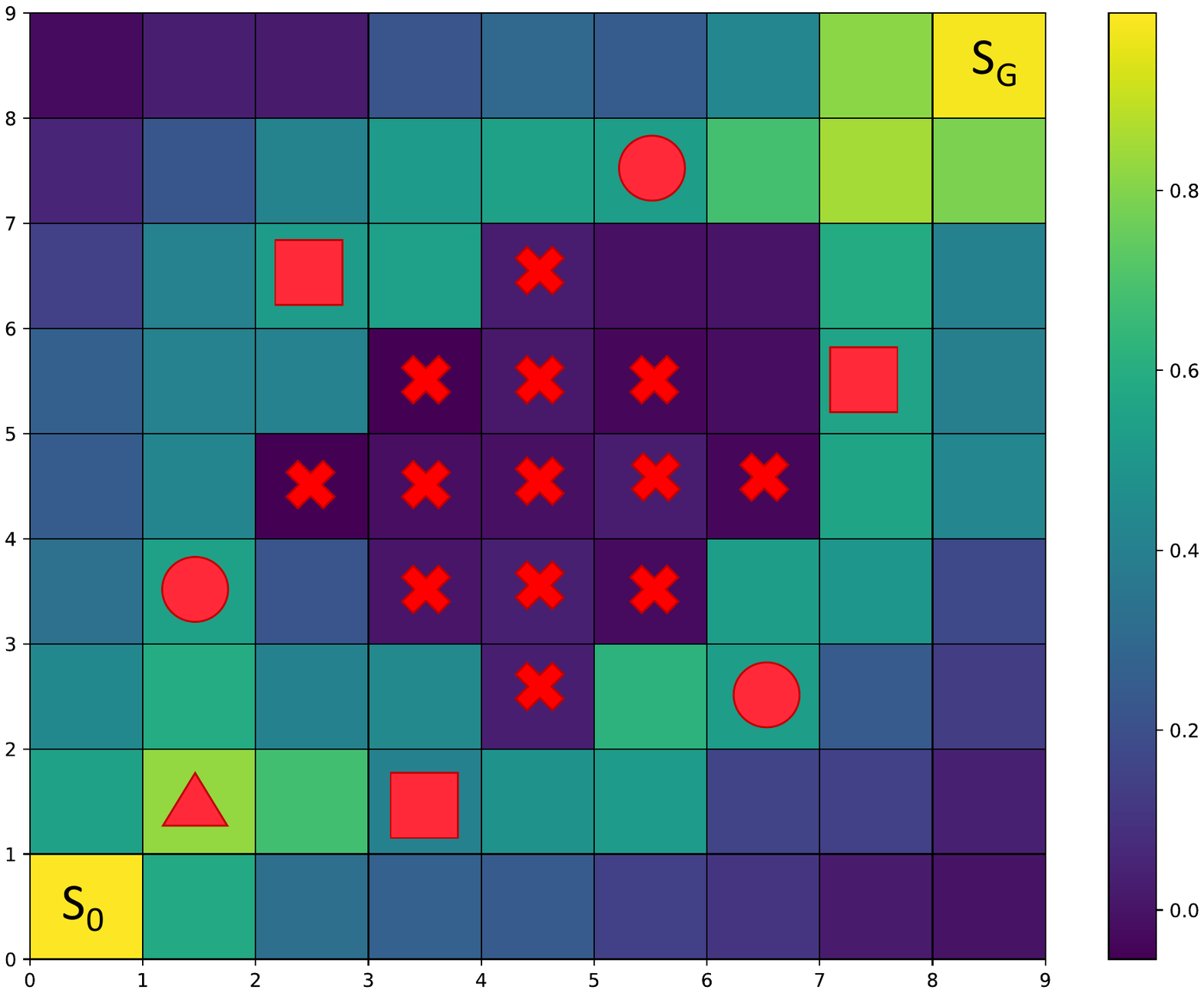}}
\caption{The superior performance of X-MEN against baselines in the grid world environment.\textbf{(a)} The ground truth reward map of the $9\times9$ gridworld. The reward of each state is 0, except for $S_G$ which is 1. Red symbols denotes constraints, where the red triangle denotes the state that must be passed through first among all the symbols, red crosses denote the states that can never be passed through, and the agent must pass through only one red square and one red circle. \textbf{(b)-(e)} The marginal probability of passing through each state of the ground truth demonstration and the distribution generated by different learning algorithms. We can see distribution of trajectories from X-MEN matches with the demonstration the most. Neither Maxent IRL nor RE-IRL can handle constraints. While MLCI knows "where not to go", it has difficulty in knowing "where must go" and we show in Figure \ref{fig:structure} that it can not generate $100\%$  trajectories satisfying constraints.} 
\label{fig:gridworld}
\end{figure*}

\begin{figure*}[t]
\subfigure[]{\label{fig:sample2sample}
\includegraphics[width=0.32\linewidth]{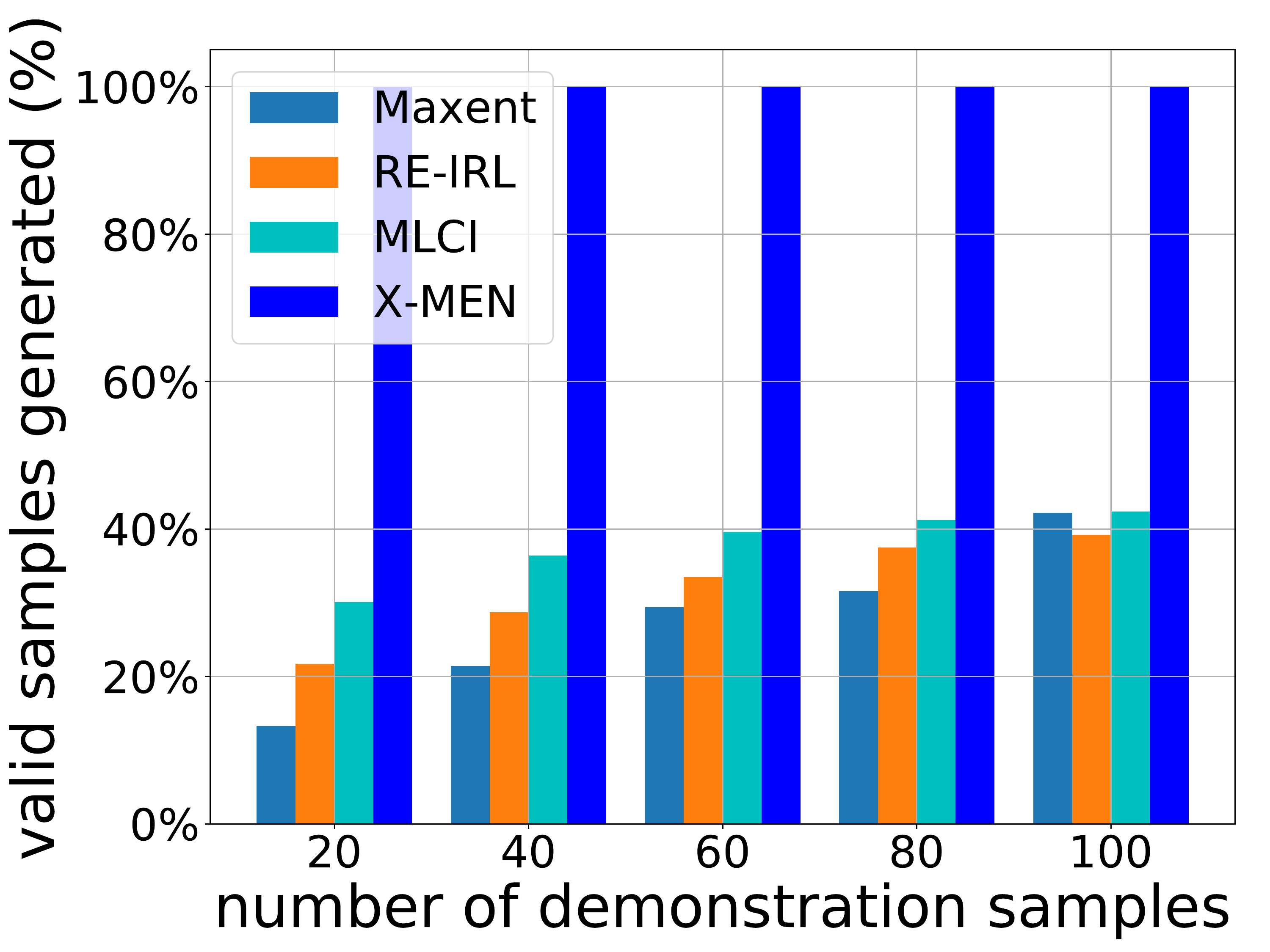}}
\subfigure[]{\label{fig:sample2time}
\includegraphics[width=0.32\linewidth]{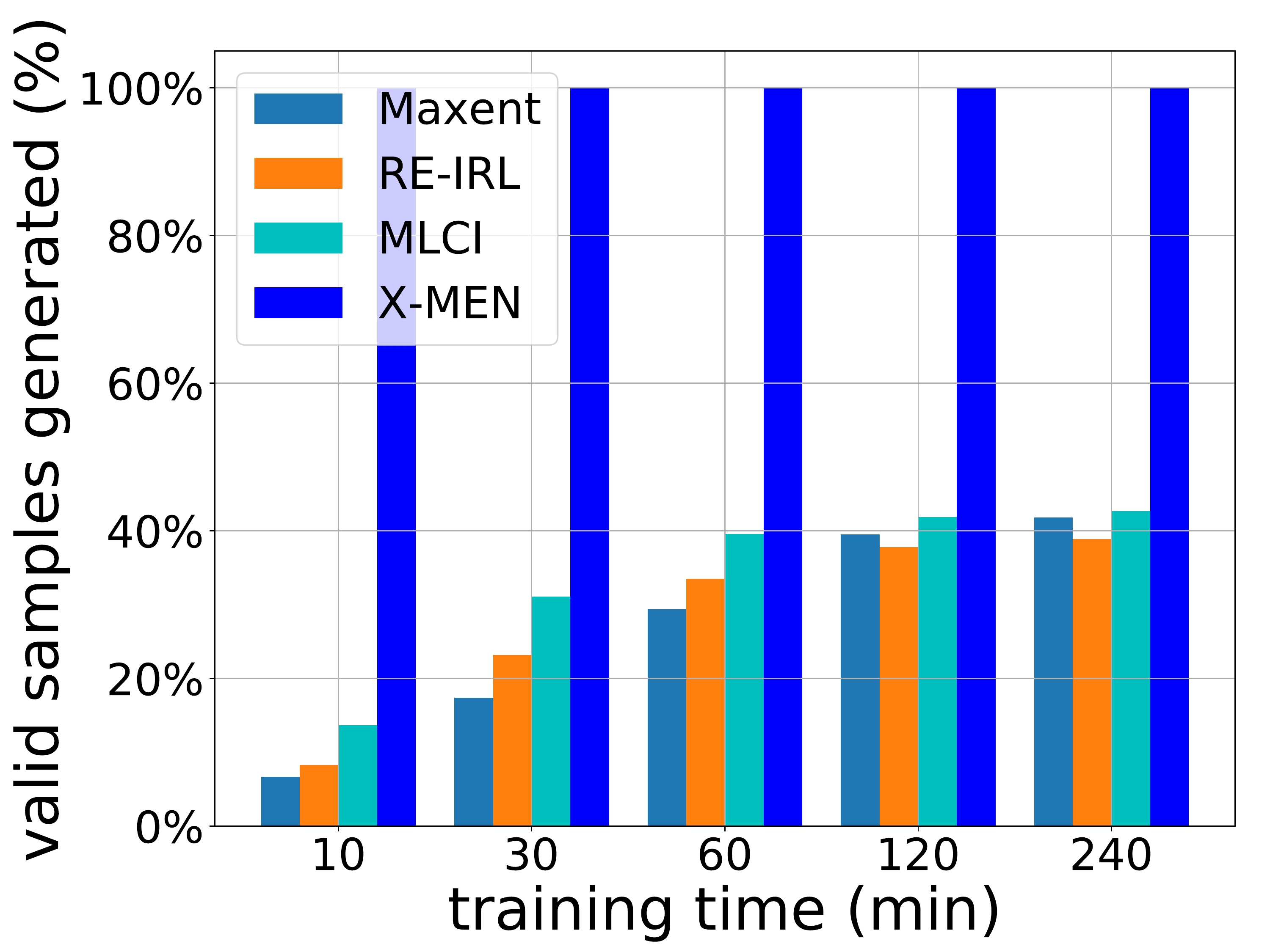}}
\subfigure[]{\label{fig:distribution}
\includegraphics[width=0.32\linewidth]{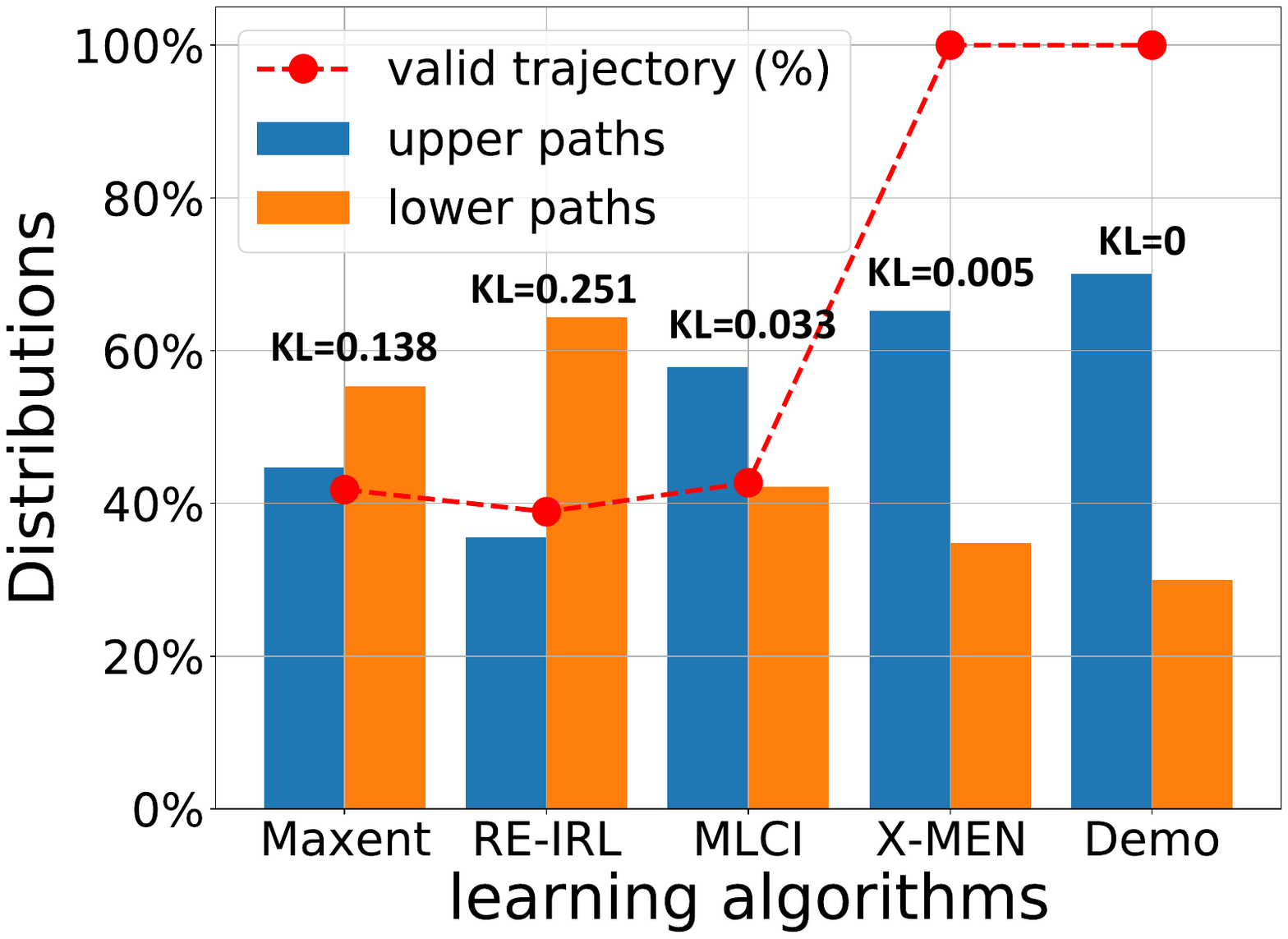}}
\caption{X-MEN outperforms competing approaches by producing 100\% valid trajectories while capturing the inductive bias in demonstration on a $9\times 9$ gridworld benchmark shown in Figure \ref{fig:reward}. (\textbf{Left}) The percentage of valid trajectories generated by different algorithms, varying the number of demonstration trajectories.  (\textbf{Middle}) The percentage of valid trajectories generated by different algorithms varying training time. (\textbf{Right}) The dashed line shows the percentage of valid trajectories generated from different algorithms. The bars show the distributions of these valid trajectories grouped by different types of paths (upper paths or lower paths). X-MEN generates 100$\%$ valid trajectories, and has the minimal KL divergence $0.005$ towards that of demonstration.} 
\label{fig:structure}
\end{figure*}

We conduct the experiments similar to those in \cite{scobee2019maximum}, where we first show the superior performance of X-MEN in a synthetic grid world benchmark and then analyze trajectories from humans as they navigate around obstacles and follow certain constraints on the floor. To obtain final trajectories, X-MEN first draws trajectories from the proposal distribution, and then re-samples from this trajectory pool according to the importance weights. 
For comparison, we compare with classic Max-Ent IRL \citep{ziebart2008maximum}, RE-IRL \citep{boularias2011relative} and recently proposed maximum likelihood constraint inference (MLCI) \cite{scobee2019maximum} which can mask out the "not to go" states in the transition distribution. We implement X-MEN using IBM ILOG CPLEX Optimizer 12.63 for queries to NP oracles and XOR-Sampling parameters are same as \cite{fan2021xorcd}. Experiments are carried out on a cluster, where each node has 24 cores and 96GB memory.



\subsection{Grid World}
We consider a 9×9 grid world. The state corresponds to the location of the agent on the grid. The agent has three actions for moving up, right, or diagonally to the upper right by one cell. The objective is to move from the starting state in the bottom-left corner $s_0$ to the goal state in the up-right corner $s_G$. ). Every state-action pair produces a distance feature, and the cumulative reward is inverse proportional to distance, which encourages short trajectories. There are additionally three more types of constraints, denoted as red symbols shown in Figure \ref{fig:reward}. The red triangle denotes the state that must be passed through first among all the symbols, red crosses denote the states that can never be passed through, and the agent must pass through only one red square and one red circle. The demonstration trajectories satisfies all the constraints and have an inductive bias: $70\%$ trajectories move along the upper paths and $30\%$ move along the lower path.

Due to the presence of hard constraints, recovering the reward map cannot be considered as the sole  performance metric for a learning algorithm.
In fact, an IRL agent with the groundtruth reward map may produce sub-optimal actions if he violates constraints. 
Therefore, we show in Figure \ref{fig:gt}-\ref{fig:mlci} the marginal distributions of passing each grid cell generated by aggregating 100 trajectories produced by different learning algorithms and the groundtruth demonstrations. We can see distribution of trajectories from X-MEN matches the demonstrations the most. Neither Maxent IRL nor RE-IRL can handle constraints. While MLCI knows "where not to go", it has difficulty in knowing "where must go" as the probability of the state marked as triangle is not 1 (we constrain that the agent must go through the triangle). Figure \ref{fig:structure} further computes the percentage of valid trajectories generated by different algorithms varying the number of demonstration trajectories (\ref{fig:sample2sample}) and training time (\ref{fig:sample2time}). X-MEN always generates $100\%$ valid trajectories while the competing methods satisfy no more than $50\%$. Moreover, we can see from the trend that even we keep increasing the number of demonstrations and the training time, the increase in baseline performance is minimal.
Figure \ref{fig:distribution} compares the recovered distribution of the trajectories, where we can see X-MEN has the minimal KL divergence 0.005 towards the ground truth distribution of demonstration.

\subsection{Human Obstacle Avoidance}
In our second example, we analyze trajectories from humans as they navigate around obstacles on the floor and follow certain constraints. We map these continuous trajectories into trajectories through a grid world where each cell represents a a 1ft-by-1ft area on the ground. The state corresponds to the location of the agent on the grid. The human agents are attempting to reach a fixed goal state $S_G$ from a given initial state $S_0$, as shown in Figure 1 in the appendix. The agent has only two actions for moving up or moving right. The shaded regions represent obstacles in the human’s environment that cannot be passed through, and the red circle represent a "must pass" choke point that every person has to walk through. Additional hard constraints are that human cannot take the same action consecutively for more than 3 times.

Demonstrations were collected from 10 volunteers, who want to move from the start state to the goal state without violating any constraints. 
Empirical observations reveal that volunteers tend to follow the shortest paths given these constraints. We train both our model and the competing approaches using these demonstrations within the same training time of $4$ hours and use 16 trajectory samples in each SGD iteration. Generated trajectories are shown in Figure 1 in the appendix, where we can see X-MEN is able to successfully avoid obstacles and pass the "must go" choke point, and the 10 generated trajectories shown in the figure are indeed the shortest paths from the start state to the goal (matching human demonstrations). Competing approaches do not generate trajectories that satisfy constraints, while the trajectories generated by X-MEN are $100\%$ valid.



\section{CONCLUSION}
We proposed X-MEN, a novel XOR maximum entropy framework for constrained Inverse Reinforcement Learning.
We showed theoretically that X-MEN converges in linear speed towards the global optimum of the likelihood  function for solving IRL problems. Empirically, we demonstrated the superior performance of X-MEN on two navigation tasks with additional hard combinatorial constraints. 
In all tasks, X-MEN generates 100\% valid samples and the generated trajectories closely match the distribution of the training set. 
%
%
For future work, we would like to extend X-MEN to model-free reinforcement learning while preserving the  theoretical guarantees. We also intend to test richer representations of the reward function in form of  deep networks on real-world, large-scale constrained IRL tasks.




\clearpage
\bibliography{fan}


\clearpage
\appendix
\section*{Appendix}

\section{Proof of Theorem 1}\label{app:Th:compute_g}

\begin{proof}(Theorem 1)
Formally, denote $g_Z$ to estimate $\nabla_{\theta}\log Z_{\theta}$ as follows using importance sampling:
\begin{align}
    g_Z &=\frac{1}{M_2} \sum_{\tau\in\mathcal{T}^{Q}_{M_2,1}}\frac{P(\tau|\theta,T)}{Q(\tau|\hat{\theta})}\nabla_{\theta}R_{\theta}(\tau)\\
    &=\frac{1}{M_2}\sum_{\tau\in\mathcal{T}^{Q}_{M_2,1}}\frac{Z_{\hat{\theta}}}{Z_{\theta}}\frac{D(\tau)e^{-R_{\theta}(\tau)}} {e^{-\hat{\theta}^Tf(\tau)}}\nabla_{\theta}R_{\theta}(\tau)
\end{align}
where $M_2$ is the sample size of importance distribution $Q$. $\mathcal{T}$ denotes the set of all feasible trajectories, and $\mathcal{T}_M^Q$ denotes the set of $M$ trajectories sampled from distribution $Q$. 
Then, we can write $Z_{\theta}$ out and further use importance sampling 
\begin{align}
    g_Z&=\frac{\frac{Z_{\hat{\theta}}}{M_2}\sum_{\tau\in\mathcal{T}^{Q}_{M_2,1}}\frac{D(\tau)e^{-R_{\theta}(\tau)}} {e^{-\hat{\theta}^Tf(\tau)}}\nabla_{\theta}R_{\theta}(\tau)}{\sum_{\tau\in\mathcal{T}} e^{-R_{\theta}(\tau)}D(\tau)I_C(\tau) }\\
    &=\frac{\frac{Z_{\hat{\theta}}}{M_2}\sum_{\tau\in\mathcal{T}^{Q}_{M_2,1}}\frac{D(\tau)e^{-R_{\theta}(\tau)}} {e^{-\hat{\theta}^Tf(\tau)}}\nabla_{\theta}R_{\theta}(\tau)}{\frac{1}{M_2}\sum_{\tau\in\mathcal{T}^Q_{M_2,2}} \frac{e^{-R_{\theta}(\tau)}D(\tau)I_C(\tau)}{I_C(\tau)e^{-\hat{\theta}^Tf(\tau)}}Z_{\hat{\theta}} }
\end{align}
where we can see the term $Z_{\hat{\theta}}/M_2$ in both the nominator and the denominator are cancelled out. Therefore, by simplifying this equation we get
\begin{align}
    g_Z=\frac{\sum_{\tau\in\mathcal{T}^{Q}_{M_2,1}}\frac{D(\tau)e^{-R_{\theta}(\tau)}} {e^{-\hat{\theta}^Tf(\tau)}}\nabla_{\theta}R_{\theta}(\tau)}{\sum_{\tau\in\mathcal{T}^Q_{M_2,2}} \frac{e^{-R_{\theta}(\tau)}D(\tau)}{e^{-\hat{\theta}^Tf(\tau)}}}
\end{align}
Especially, if we represent $R_{\theta}(\tau)$ as linear combination of features and let $\hat{\theta}=\theta$, this equation further simplifies to 
\begin{align}
     g_Z =\frac{\sum_{\tau\in\mathcal{T}^{Q}_{M_2,1}}D(\tau)f(\tau)}{\sum_{\tau\in\mathcal{T}^Q_{M_2,2}}D(\tau)}
\end{align}
where $f(\tau)$ is the feature of trajectory $\tau$. Furthermore, we also have the expectation of $g_Z$ as
\begin{align}
    \mathbb{E}[g_Z]&=\mathbb{E}_{Q}[\frac{P(\tau|\theta,T)}{Q(\tau|\theta)}\nabla_{\theta}R_{\theta}(\tau)]\\
    &=\int P(\tau|\theta,T)f(\tau)\\
    &=\mathbb{E}_{P}[f(\tau)]
\end{align}
Therefore, because of
\begin{align*}
    g_{\theta}&=\frac{1}{M_1}\sum_{\tau\in \mathcal{D}_{M_1}} f(\tau)-g_Z\\
    &=\frac{1}{M_1}\sum_{\tau\in \mathcal{D}_{M_1}} f(\tau)-\frac{\sum_{\tau\in\mathcal{T}^{Q}_{M_2,1}}D(\tau)f(\tau)}{\sum_{\tau\in\mathcal{T}^Q_{M_2,2}}D(\tau)}
\end{align*}
we can see the expectation of $g_{\theta}$ is
\begin{align}
    \mathbb{E}[g_{\theta}]=\mathbb{E_{\mathcal{D}}}[f(\tau)]-\mathbb{E}_{P}[f(\tau)]=\nabla_{\theta}\log Z_{\theta}
\end{align}
This completes the proof.
\end{proof}

\section{Proof of Theorem 3}\label{app:Th:main}
To prove Theorem 3, we first introduce a Lemma as follows:
\begin{Lm}\label{Lm:L-smooth}
If the total variation $\max_{\theta}Var_{P}(f(\tau))\leq \sigma_2^2$, then $L(\theta)$ is $\sigma_2^2$-smooth w.r.t. $\theta$.
\end{Lm}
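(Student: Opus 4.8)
The plan is to show that the Hessian of $L(\theta)$ equals the covariance matrix of the feature vector $f(\tau)$ under the model distribution $P(\tau|\theta,T)$, and then bound its spectral norm by the assumed variance bound $\sigma_2^2$. First I would recall that, up to terms independent of $\theta$, $L(\theta) = -\mathbb{E}_{\mathcal{D}}[R_\theta(\tau)] + \log Z_\theta = -\theta^T \mathbb{E}_{\mathcal{D}}[f(\tau)] + \log Z_\theta$, where $Z_\theta = \sum_{\tau} e^{\theta^T f(\tau)} D(\tau) I_C(\tau)$ using the sign convention $R_\theta(\tau)=\theta^T f(\tau)$. The first term is linear in $\theta$, so it contributes nothing to the Hessian; all curvature comes from $\log Z_\theta$. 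This is exactly the classical log-partition-function computation for an exponential family.

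The key steps, in order: (i) compute $\nabla_\theta \log Z_\theta = \mathbb{E}_P[f(\tau)]$, which reproduces the gradient formula in Equation~\ref{eq:grad_ll}; (ii) differentiate once more to obtain
\begin{align*}
\nabla^2_\theta \log Z_\theta = \mathbb{E}_P\!\left[f(\tau) f(\tau)^T\right] - \mathbb{E}_P[f(\tau)]\,\mathbb{E}_P[f(\tau)]^T = \mathrm{Cov}_P(f(\tau)),
\end{align*}
which is positive semidefinite (this also re-confirms convexity of $L$); (iii) bound the operator norm of this covariance matrix. For any unit vector $u$, $u^T \mathrm{Cov}_P(f(\tau))\, u = \mathrm{Var}_P(u^T f(\tau)) \le \mathbb{E}_P[(u^Tf(\tau))^2] \le \mathbb{E}_P[\|f(\tau)\|_2^2]$, but more tightly $u^T\mathrm{Cov}_P(f)u = \mathbb{E}_P[(u^Tf)^2]-(\mathbb{E}_P[u^Tf])^2 \le \mathbb{E}_P[\|f\|_2^2] - \|\mathbb{E}_P[f]\|_2^2 \cdot(\text{coordinate bound})$; cleanly, since $\mathrm{Var}_P(u^Tf)\le \mathrm{tr}\,\mathrm{Cov}_P(f) = \mathbb{E}_P[\|f\|_2^2]-\|\mathbb{E}_P[f]\|_2^2 = Var_P(f(\tau))$, we get $\|\nabla^2_\theta L(\theta)\|_2 = \|\mathrm{Cov}_P(f(\tau))\|_2 \le Var_P(f(\tau)) \le \sigma_2^2$ for every $\theta$, using the hypothesis $\max_\theta Var_P(f(\tau)) \le \sigma_2^2$. (iv) Finally, a uniform bound $\|\nabla^2_\theta L(\theta)\|_2 \le \sigma_2^2$ on the whole domain is equivalent to the gradient $\nabla_\theta L$ being $\sigma_2^2$-Lipschitz, i.e. $L$ is $\sigma_2^2$-smooth, by the mean value theorem applied along the segment between any $\theta_1,\theta_2$.

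The only mildly delicate point — and the step I would be most careful about — is step (iii): justifying that the operator norm of the covariance matrix is controlled by the scalar quantity $Var_P(f(\tau)) = \mathbb{E}_P[\|f(\tau)\|_2^2]-\|\mathbb{E}_P[f(\tau)]\|_2^2$ as defined in the paper. This follows because $\lambda_{\max}(\mathrm{Cov}_P(f)) \le \mathrm{tr}(\mathrm{Cov}_P(f)) = \sum_i \mathrm{Var}_P(f_i) = \mathbb{E}_P[\|f\|_2^2] - \|\mathbb{E}_P[f]\|_2^2$, where the trace identity uses that the covariance matrix is PSD so its largest eigenvalue is at most the sum of all (nonnegative) eigenvalues. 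Everything else is the standard exponential-family differentiation, for which I would only need to assume the usual regularity to exchange $\nabla_\theta$ and the (finite, since the trajectory space is combinatorial) sum defining $Z_\theta$. This matches the claim attributed to \citep{fan2021xorcd} that $L(\theta)$ is $L$-smooth whenever $Var_P(f(\tau))$ is bounded, and supplies the smoothness hypothesis needed to invoke Theorem~\ref{Th:XOR_SGD_bound} in the proof of Theorem~\ref{Th:main}.
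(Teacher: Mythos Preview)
Your proposal is correct and follows essentially the same route as the paper: compute the Hessian of $L(\theta)$ as the covariance matrix $\mathrm{Cov}_P(f(\tau))$, bound its spectral norm by its trace (using PSD-ness so that $\lambda_{\max}\le\sum_i\lambda_i$), identify the trace with $Var_P(f(\tau))\le\sigma_2^2$, and translate the uniform Hessian bound into a Lipschitz-gradient statement via the mean value theorem. The only cosmetic difference is ordering---the paper invokes the mean value theorem first and then bounds $\|\nabla^2 L(\tilde\theta)\|_2$, whereas you bound the Hessian first and apply MVT at the end---but the argument is the same.
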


\begin{proof}
Since $L(\theta)=-\frac{1}{N}\sum_{\tau\in\mathcal{D}}\log P(\tau|\theta,T)$, $L$-smoothness requires that
\begin{align*}
    ||\nabla L(\theta_1)-\nabla l(\theta_2)||_2\leq L||\theta_1-\theta_2||_2
\end{align*}
where $L$ is a constant. Because of the mean value theorem, there exists a point $\tilde{\theta}\in(\theta_1, \theta_2)$ such that
\begin{align*}
    \nabla L(\theta_1)-\nabla L(\theta_2)=\nabla(\nabla L(\tilde{\theta}))(\theta_1-\theta_2).
\end{align*}
Taking the $L_2$ norm for both sides, we have
\begin{align}
    ||\nabla L(\theta_1)-\nabla L(\theta_2)||_2=&
    ||\nabla(\nabla L(\tilde{\theta}))(\theta_1-\theta_2)||_2\nonumber\\
    \leq&||\nabla(\nabla L(\tilde{\theta}))||_2~||\theta_1-\theta_2||_2\label{eq:lem11}
\end{align}
Then, the problem is to bound the matrix 2-norm $||\nabla(\nabla L(\tilde{\theta}))||_2$. Since we know the explicit form of  $L(\theta)$, we know
\begin{align}
    \nabla L(\theta)&=\nabla\log Z_{\theta}-\frac{1}{N}\sum_{\tau\in\mathcal{D}}f(\tau),    \nonumber\\
    \nabla(\nabla L(\theta))&= \sum_{\tau\in\mathcal{T}}[f(\tau)-\nabla\log Z_{\theta}][f(\tau)-\nabla\log Z_{\theta}]^T P(\tau|\theta,T)\label{eq:lem12},
\end{align}
where $\nabla(\nabla L(\theta))$ is the co-variance matrix. Denote Cov$_{\theta}[f(\tau)]=\nabla(\nabla L(\theta))$, which is both symmetric and positive semi-definite. We have
\begin{align*}
    ||\nabla(\nabla L(\tilde{\theta}))||_2=||\text{Cov}_{\theta}[f(\tau)]||_2=\lambda_{max},
\end{align*}
where $\lambda_{max}$ is the maximum eigenvalue of the matrix Cov$_{\theta}[f(\tau)]$. Then, because of the positive semi-definiteness of the co-variance matrix, all the eigenvalues are non-negative, and we can bound $\lambda_{max}$ as
\begin{align*}
    \lambda_{max}\leq\sum_{i}\lambda_i=Tr(\text{Cov}_{\theta}[f(\tau)]),
\end{align*}
where $Tr($Cov$_{\theta}[\phi(X)])$ is the trace of matrix Cov$_{\theta}[f(\tau)]$.  Using the definition in Equation \ref{eq:lem12}, $Tr($Cov$_{\theta}[f(\tau)])$ can be further derived as: 
\begin{align*}
    Tr(\text{Cov}_{\theta}[f(\tau)])=\mathbb{E}_{P}[||f(\tau)||_2^2]-||\mathbb{E}_{P}[f(\tau)]||_2^2,
\end{align*}
which is equal to the total variation $Var_{P}(f(\tau))$. Therefore, we have
\begin{align*}
    ||\nabla(\nabla L(\tilde{\theta}))||_2\leq Var_{P}(f(\tau))\leq \sigma_2^2.
\end{align*}
Combining this with Equation \ref{eq:lem11}, we know 
\begin{align*}
    ||\nabla L(\theta_1)-\nabla L(\theta_2)||_2\leq \sigma_2^2~||\theta_1-\theta_2||_2.
\end{align*}
This completes the proof.

\end{proof}

We give the full proof of Theorem 3 as follows:

\begin{proof}(Theorem 3)
Since we use $M_1$ samples from the training set $\{\tau_i\}_{i=1}^{M_1}$ and $2M_2$ samples $\tau'_1, \dots, \tau'_{M_2}$ from $Q(\tau|\theta)$ using XOR-Sampling at each iteration, we have 
$$g_k = \frac{1}{M_1}\sum_{j=1}^{M_1}f(\tau_j)-\frac{\sum_{j=1}^{M_2}D(\tau'_j)f(\tau'_j)}{\sum_{j=M_2+1}^{2M_2}D(\tau'_j)}$$
Assume these samples $\tau'$ form distribution $Q'$, and denote $g_k^i=\frac{1}{M_1}\sum_{j=1}^{M_1}f(\tau_j)-\frac{D(\tau'_i)f(\tau'_i)}{\mathbb{E}_{Q'}[D(\tau)]}$, we have the expectation of $g_k$ as
\begin{align*}
\mathbb{E}_{\mathcal{D},Q'}[g_k]&=\frac{\mathbb{E}_{Q'}[\mathbb{E}_{\mathcal{D}}[f(\tau)]\mathbb{E}_{Q'}[D(\tau)]-D(\tau')f(\tau')]}{\mathbb{E}_{Q'}[D(\tau)]}\\
&=\mathbb{E}_{\mathcal{D},Q'}[g_k^i].
\end{align*}

In each iteration $k$ we can adjust the parameters in XOR-Sampling to give the constant factor approximation of both the denominator and the nominator, then for each $g_k^i$ we can obtain from Theorem 2 that
\begin{align}
    \frac{1}{\delta^2} [\nabla L(\theta_k)]^+ &\leq \mathbb{E}_{\mathcal{D},Q'}[g_k^{i+}]\leq \delta^2 [\nabla L(\theta_k)]^+,\label{eq:pos2}\\
    \delta^2 [\nabla L(\theta_k)]^- &\leq \mathbb{E}_{\mathcal{D},Q'}[g_k^{i-}]\leq  \frac{1}{\delta^2}[\nabla L(\theta_k)]^-.\label{eq:neg2}
\end{align}
where $\nabla L(\theta_k)$ is the true gradient at $k$-th iteration. Denote $g_k^+=\max\{g_k,\textbf{0}\}$ and $g_k^-=\min\{g_k,\textbf{0}\}$. 
Clearly, $g_k^{i+} \geq 0$ and $g_k^{i-} \leq 0$. Moreover,
for a given dimension, either $g_k^{i+}=0$ for that dimension or $g_k^{i-}=0$. 
Evaluating $g_k$ dimension by dimension, we can see that
$g_k^+=\frac{1}{M_2}\sum_{i=1}^{M_2} g_k^{i+}$ and $g_k^-=\frac{1}{M_2}\sum_{i=1}^{M_2} g_k^{i-}$.
Combined with Equation~\ref{eq:pos2} and \ref{eq:neg2}, we know 
\begin{align*}
    \frac{1}{\delta^2} [\nabla L(\theta_k)]^+ \leq \mathbb{E}[g_k^+]
    \leq \delta^2 [\nabla L(\theta_k)]^+,\\
    \delta^2 [\nabla L(\theta_k)]^- \leq \mathbb{E}[g_k^-]
    \leq \frac{1}{\delta^2} [\nabla L(\theta_k)]^-.
\end{align*}

In terms of variance, assume the variance of each $f(\tau'_j)$ can be bounded by $Var_{P}(f(\tau'_j))\leq \sigma_2^2$ resulted from both importance sampling and XOR-Sampling, and because $\mathbb{E}_{\mathcal{D},P}[g_k]=\mathbb{E}_{\mathcal{D},P}[g_k^i]$, the variance of $g_k$, denoted as $Var_{\mathcal{D},P}(g_k)$, can then be bounded as
\begin{align*}
    Var_{\mathcal{D},P}(g_k)&= Var_D(\frac{1}{M_1}\sum_{j=1}^{M_1}f(\tau_j)) +\\
    & Var_{P}(\frac{1}{M_2}\sum_{i=1}^{M_2}f(\tau'_i))\\
    & =\frac{1}{M_1}Var_D(f(\tau_j)) + \frac{1}{M_2}Var_{P}(f(\tau'_i))\\
    & \leq \frac{\sigma_1^2}{M_1} + \frac{\sigma_2^2}{M_2}
\end{align*}

Therefore, since $L(\theta)$ is convex and $\sigma_2^2-$smooth from Lemma \ref{Lm:L-smooth}, we can then apply Theorem 4 to get the result in Theorem 3.
\begin{align*}
     \mathbb{E}[L(\overline{\theta_K})]-OPT &\leq \frac{\delta^2||\theta_0-\theta^*||_2^2}{2\eta K}+\frac{\eta\max_{\theta_k}\{Var_{\mathcal{D},P}(g_k)\}}{\delta^2}\\
     &\leq\frac{\delta^2||\theta_0-\theta^*||_2^2}{2\eta K}+\frac{\eta\sigma_1^2}{\delta^2 M_1}+\frac{\eta\sigma_2^2}{\delta^2M_2}.
\end{align*}
This completes the proof.
\end{proof}

\section{Proof of Theorem 5}
\begin{proof} (Theorem 5)
Since we use flow constraints to ensure valid trajectories, the number of binary variables in XOR-Sampling in then $O(|\mathcal{S}||\mathcal{A}|)$. 
From Theorem 2 we know that in each iteration of X-MEN, we need to access $O(|\mathcal{S}||\mathcal{A}|\ln\frac{|\mathcal{S}||\mathcal{A}|}{\gamma})$ queries of NP oracles in order to generate one sample. However, as specified also in \cite{ermon2013embed}, only the first sample needs those many queries. Once we have the first sample, the number of XOR constraints to add can be known in generating future samples for this SGD iteration. 
Therefore, we fix the number of XOR constraints added starting the generation of the second sample.
As a result, we only need one  NP oracle query in generating each of the following $2(M_2-1)$ samples. 
Therefore, total queries in each iteration will be $O(|\mathcal{S}||\mathcal{A}|\ln\frac{|\mathcal{S}||\mathcal{A}|}{\gamma}+M_2)$. To complete all $K$ SGD iterations,  X-MEN needs $O(K|\mathcal{S}||\mathcal{A}|\ln\frac{|\mathcal{S}||\mathcal{A}|}{\gamma}+KM_2)$ NP oracle queries in total.
\end{proof}

\section{ADAPT SAMPLE DISTRIBUTION VIA VARIANCE REDUCTION}
We have shown that one can solve the constrained inverse reinforcement learning problem by estimating the expectation term in maximum likelihood learning by importance sampling, where a proposal distribution is used to get valid samples that satisfy additional constraints via XOR-Sampling. In this section we show that parameter $\hat{\theta}$ of the proposal distribution can be adaptively updated based on variance reduction. Denote the importance weight $w(\tau)=\frac{P(\tau|\theta,T)}{Q(\tau|\hat{\theta})}$. Since we use importance sampling, the expectation $\mu_q=\mathbb{E}[g_Z]$ is an unbiased estimator of the true value $\mu=\nabla_{\theta}\log Z_{\theta}$, and the variance is $\sigma_Q^2/n$ where 
\begin{align}
    \sigma_Q^2 &= \sum_{\tau} \frac{\nabla_{\theta}R(\theta)^2P(\tau|\theta,T)^2}{Q(\tau|\hat{\theta})}-\mu^2 \nonumber\\
    &= \mathbb{E}_{Q}[\nabla_{\theta}R(\theta)^2w(\tau)^2]-\mu^2
\end{align}
When the reward function $R_{\theta}(\tau)$ is represented by a linear combination of hand-crafted features, i.e., $R_{\theta}(\tau)=\theta^Tf(\tau)$, we let the sample distribution $Q$ share weights with the nominal distribution $P$, i.e., $\hat{\theta}=\theta$, which means each iteration we update parameters $\theta$ in $P$, we update parameters as $\hat{\theta}=\theta$ at the same time. However, in a more general case when reward function $R_{\theta}(\tau)$ is represented not in a linear form, such as a neural network presentation, we can not simply let $\hat{\theta}=\theta$. In this case, we need to find the optimal $\hat{\theta}$ by optimization on some loss functions. One possible way is to minimize the variance via stochastic gradient descent, namely,
\begin{align}
    \hat{\theta}=\text{argmin}_{\hat{\theta}}\sigma_Q^2=\text{argmin}_{\hat{\theta}}\mathbb{E}_{Q}[\nabla_{\theta}R(\theta)^2w(\tau)^2]
\end{align}
In this paper we only let $R_{\theta}(\tau)=\theta^Tf(\tau)$ where $f(\tau)$ is either hand-crafted features or neural networks. However, even in this case we can still use the variance reduction method to update $\hat{\theta}$. We leave this version of updating $\hat{\theta}$ for future work where we can see it might not still hold the theoretical guarantee of convergence rate, yet might work better in practice due to the strong representation power of deep neural network.

\section{Human Obstacle Avoidance}

\begin{figure}[t]
\centering
\includegraphics[width=0.5\linewidth]{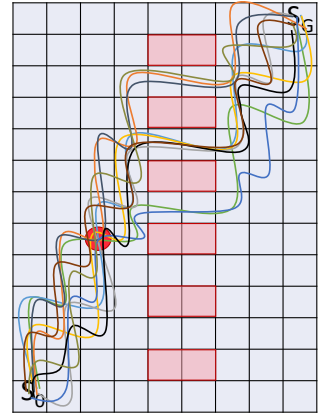}
\caption{Overlaid trajectories generated by X-MEN to learn the human preferences. The goal is to move from $S_0$ to $S_G$ and the action space contains only going up and right. The shaded regions represent obstacles in the human’s environment, and the red circle represent a "must pass" point. Additional constraints are that human cannot take the same action consecutively for 3 times. We can see the generated trajectories from X-MEN satisfy all the constraints and follow the shortest possible paths, similar to what human demonstrators' actions.}\label{fig:obstacle}
\vspace*{-0.3cm}
\end{figure}

 The human agents are attempting to reach a fixed goal state $S_G$ from a given initial state $S_0$, as shown in Figure \ref{fig:obstacle} in the appendix. The agent has only two actions for moving up or moving right. The shaded regions represent obstacles in the human’s environment that cannot be passed through, and the red circle represent a "must pass" choke point that every person has to walk through. Additional hard constraints are that human cannot take the same action consecutively for more than 3 times. 
 
 We only show the generated trajectories of X-MEN in Figure \ref{fig:obstacle}. We can see X-MEN is able to successfully avoid obstacles and pass the "must go" choke point, and the 10 generated trajectories shown in the figure are indeed the shortest paths from the start state to the goal (matching human demonstrations). What is worth noting is that X-MEN learns to go up first before passing through the gap between two obstacles, because otherwise the trajectory has to violate the constraint of taking the same action consecutively for no more than 3 times.

\end{document}